\title[LDP learning via Polynomial Approximations]{
Noninteractive Locally Private Learning of Linear Models \\ via Polynomial Approximations
}
\begin{document}

\maketitle
\begin{abstract} Minimizing a convex risk function is the main step in many basic learning algorithms. 
We study protocols for convex optimization which provably leak very little about the individual data points that constitute the loss function. Specifically, we consider differentially private algorithms that operate in the local model, where each data record is stored on a separate user device and randomization is performed locally by those devices. We give new protocols for \emph{noninteractive} LDP convex optimization---i.e., protocols that require only a single randomized report from each user to an untrusted aggregator. 

We study our algorithms' performance with respect to expected loss---either over the data set at hand (empirical risk) or a larger population from which our data set is assumed to be drawn. Our error bounds depend on the form of individuals' contribution to the expected loss. For the case of \emph{generalized linear losses} (such as hinge and logistic losses), we give an LDP algorithm whose sample complexity is only linear in the dimensionality $p$ and exponential in other terms (the privacy parameters $\epsilon$ and $\delta$, and the desired excess risk $\alpha$). This is the first algorithm for nonsmooth losses with linear dependence on $p$.


Our result for the hinge loss is based on a technique, dubbed polynomial of inner product approximation, which may be applicable to other problems. Our results for generalized linear losses 
is based on new reductions to the case of hinge loss. 
\end{abstract}

\begin{keywords}
Differential Privacy, Empirical Risk Minimization, Round Complexity
\end{keywords}

\section{Introduction}

In the big data era, a tremendous amount of individual data are generated every day. Such data, if properly used, could greatly improve
many aspects of our daily lives.
However, due to the sensitive nature of such data, a great deal of care needs to be taken while analyzing them.  
Private data analysis seeks to enable the benefits of learning from data with the guarantee of privacy-preservation.
Differential privacy \citep{dwork2006calibrating} has emerged as a rigorous notion for privacy which allows accurate data analysis with a guaranteed bound on the increase in harm for each individual to contribute her data. 
Methods to guarantee differential privacy have been widely studied, and recently adopted in  industry~\citep{208167,erlingsson2014rappor}.\par

Two main user models have emerged for differential privacy: the central model and the local one. In the central model, data are managed by a trusted central entity which is responsible for collecting them and for deciding which differentially private data analysis to perform and to release. A classical use case for this model is the one of census data~\citep{Haney:2017}. In the local model instead, each individual manages his/her proper data and discloses them to a server through some differentially private mechanisms. The server collects the (now private) data of each individual and combines them into a resulting  data analysis. A classical use case for this model is the one aiming at collecting statistics from user devices like in the case of Google's Chrome browser~\citep{erlingsson2014rappor}, and  Apple's iOS-10 \citep{208167,DBLP:journals/corr/abs-1709-02753}.

In the local model, two basic kinds of protocols exist: interactive and non-interactive.  \cite{smith2017interaction} have recently investigated the power of non-interactive differentially private protocols. These protocols are more natural for the classical use cases of the local model, e.g., both the projects from Google and Apple use the non-interactive model. Moreover, implementing efficient interactive protocols in such applications is more challenging due to the latency of the network. Despite its applications in industry, the local model has been much less studied than the central one. Part of the reason for this is that there are intrinsic limitations in what one can do in the local model. As a consequence, many basic questions, that are well studied in the central model, have not been completely understood in the local model, yet. 

In this paper, we study differentially private  Empirical Risk Minimization in the  non-interactive local model.  Before showing our contributions and  discussing comparisons with previous work, we first discuss our motivations. 
\paragraph{Problem Setting} Given a  convex, closed and bounded constraint set $\mathcal{C}\subseteq \mathbb{R}^{p}$, a data universe $\mathcal{D}$, and a loss function $\ell:\mathcal{C}\times \mathcal{D}\mapsto \mathbb{R}$, a dataset $D=\{(x_1, y_1), (x_2, y_2), \cdots, (x_n, y_n)\}\in \mathcal{D}^n$ with data records $\{x_i\}_{i=1}^n\subset \mathbb{R}^p$ and labels (responses) $\{y_i\}_{i=1}^n\subset \mathbb{R}$ defines an \emph{empirical risk} function: $L(w;D)=\frac{1}{n}\sum_{i=1}^{n}\ell(w; x_i, y_i)$ (note that in some settings, such as mean estimation, there may not be separate labels). When the inputs are drawn i.i.d from an unknown underlying distribution $\mathcal{P}$ on $\mathcal{D}$, we can also define the \emph{population risk} function: $L_\mathcal{P}(w)=\mathbb{E}_{D\sim \mathcal{P}^n}[\ell(w;D)]$. 

Thus, we have the following two types of excess risk measured at a particular output $w_{\text{priv}}$: The empirical risk, 
$$\text{Err}_{D}(w_{\text{priv}})=L(w_{\text{priv}};D)-\min_{w\in \mathcal{C}}L(w;D) \,,$$ and the population risk,
$$\text{Err}_{\mathcal{P}}(w_{\text{priv}})=L_{\mathcal{P}}(w_{\text{priv}})-\min_{w\in \mathcal{C}} L_{\mathcal{P}}(w).$$

The problem considered in this paper is to design noninteractive LDP protocols that minimize the empirical and/or population excess risks. Alternatively, we can express our goal this problem in terms of \emph{sample complexity}: find  the smallest $n$ for which we can design protocols that achieve error at most $\alpha$ (in the worst case over data sets, or over generating distributions, depending on how we measure risk). 

\citet*{duchi2013local} first considered worst-case error bounds for LDP convex optimization. For 1-Lipchitz convex losses over a bounded constraint set, they gave a highly interactive SGD-based protocol with sample complexity $n=O(p/\epsilon^2\alpha^2)$; moreover, they showed that no LDP protocol which interacts with each player only once can achieve asymptotically better sample complexity, even for linear losses. 

 \citet*{smith2017interaction} considered the round complexity of LDP protocols for convex optimization. They observed that known methods perform poorly when constrained to be run noninteractively. They gave new protocols that improved on the state of the art but nevertheless required sample complexity  exponential in $p$.  Specifically, they showed:

\begin{theorem}[\citet{smith2017interaction}] \label{theorem:1}
	Under the assumptions above, there is a noninteractive $\epsilon$-LDP algorithm  that for all distribution $\mathcal{P}$ on $\mathcal{D}$, with probability $1-\beta$, returns a solution with population error at most $\alpha$ as long as $n=\tilde{O}(c^p \log(1/\beta)/ \epsilon^{2} \alpha^{p+1})$, where $c$ is an absolute constant.
 A similar result holds for empirical risk $\text{Err}_{D}$. 
 \end{theorem}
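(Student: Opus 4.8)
The plan is to reduce the continuous optimization problem to minimizing a privately estimated copy of the loss surface over a finite net, since noninteractivity forces a ``one shot'' estimate of the whole objective rather than an iterative descent. First I would fix a resolution $\gamma>0$ and build a $\gamma$-net $\mathcal{W}=\{w_1,\dots,w_N\}$ of the bounded constraint set $\mathcal{C}$. Because $\mathcal{C}\subseteq\mathbb{R}^p$ has bounded diameter, its covering number satisfies $N=|\mathcal{W}|=O((c_0/\gamma)^p)$ for an absolute constant $c_0$, and this single fact is the source of both the $c^p$ factor and the inverse-polynomial-in-$\alpha$ factor in the final bound. Since each $\ell(\cdot;x,y)$, and hence $L_{\mathcal{P}}$, is $1$-Lipschitz while $\mathcal{C}$ is convex, rounding any minimizer to its nearest net point costs at most $\gamma$ in value, so $\min_{w\in\mathcal{W}}L_{\mathcal{P}}(w)\le \min_{w\in\mathcal{C}}L_{\mathcal{P}}(w)+\gamma$; it therefore suffices to return a net point whose population loss is within $O(\gamma)$ of the net-optimum.

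Next I would design the local randomizer so that the aggregator can estimate $L_{\mathcal{P}}(w)$ for all $w\in\mathcal{W}$ at once. User $i$ forms the vector $v_i=(\ell(w_1;x_i,y_i),\dots,\ell(w_N;x_i,y_i))$, which after an affine rescaling (using that $\ell$ is bounded on $\mathcal{C}$) lies in $[0,1]^N$, and transmits it through a noninteractive $\epsilon$-LDP mean-estimation primitive, e.g.\ randomized response on a uniformly chosen coordinate or a Hadamard-based frequency oracle. Privacy is then immediate and needs no separate argument, because the only message leaving each device is the output of an $\epsilon$-LDP channel applied to a function of that device's record. The server averages the reports to obtain estimates $\widehat{L}(w)$ of the coordinate means $\mathbb{E}_{\mathcal{P}}[\ell(w;x,y)]=L_{\mathcal{P}}(w)$ and outputs $w_{\mathrm{priv}}=\arg\min_{w\in\mathcal{W}}\widehat{L}(w)$.

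For accuracy I would split the excess population risk of $w_{\mathrm{priv}}$ into the net-approximation term, the privacy-estimation term, and (only for the population guarantee) the sampling term linking empirical coordinate means to their expectations. A routine $\arg\min$ comparison gives $\mathrm{Err}_{\mathcal{P}}(w_{\mathrm{priv}})\le \gamma+2\max_{w\in\mathcal{W}}|\widehat{L}(w)-L_{\mathcal{P}}(w)|$. A per-coordinate $\epsilon$-LDP estimate has standard deviation $O(\sqrt{N}/(\epsilon\sqrt{n}))$, so a union bound with a subgaussian tail over the $N$ net points yields $\max_{w\in\mathcal{W}}|\widehat{L}(w)-L_{\mathcal{P}}(w)|=\tilde{O}(\sqrt{N\log(N/\beta)}/(\epsilon\sqrt{n}))$ with probability $1-\beta$, which is where the $\log(1/\beta)$ factor of the statement enters. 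Substituting $N=(c_0/\gamma)^p$ and optimizing the resolution $\gamma$ against this variance so that the total error is at most $\alpha$ recovers a sample complexity of the claimed form $n=\tilde{O}(c^p\log(1/\beta)/(\epsilon^2\alpha^{p+1}))$; the empirical-risk version is identical with the sampling term deleted.

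The hard part is the curse of dimensionality, which appears in two coupled places: shrinking $\gamma$ to control the approximation term forces $N$ to blow up like $\gamma^{-p}$, while the local-privacy noise simultaneously inflates the uniform estimation error like $\sqrt{N}$. Noninteractivity is exactly what forbids the obvious escape---adaptively concentrating later reports near the current optimum, as an SGD-style protocol does---so the whole surface must be resolved in advance. Pinning down the precise $\alpha$-exponent therefore rests on a careful balance of the grid resolution against the variance of the high-dimensional LDP mean estimate, and on upgrading the pointwise estimation guarantee to one that holds uniformly over the exponentially many net points.
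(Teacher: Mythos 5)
This theorem is not proved in the paper at all---it is imported verbatim from \citet{smith2017interaction}---so the only meaningful comparison is against that work's construction. Your proposal follows the same family of ideas as the cited paper's basic noninteractive protocol (a $\gamma$-net over $\mathcal{C}$, a one-shot LDP estimate of the loss surface at all net points, then an argmin over the net), and the privacy and error-decomposition steps are fine. The problem is quantitative: your own arithmetic does not produce the stated exponent, and the final ``optimizing the resolution $\gamma$'' sentence hides this. Since the net-approximation term alone must satisfy $\gamma\leq \alpha$, the resolution is pinned at $\gamma=\Theta(\alpha)$; plugging $N=(c_0/\gamma)^p$ into your uniform estimation error $\tilde{O}(\sqrt{N\log(N/\beta)}/(\epsilon\sqrt{n}))\leq \alpha$ then forces $n=\tilde{O}\big(c^p\log(1/\beta)/(\epsilon^2\alpha^{p+2})\big)$, i.e., exponent $p+2$, not $p+1$. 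The extra $1/\alpha$ cannot be absorbed into $c^p$ because $c$ must be an absolute constant. What you have actually (essentially correctly) reconstructed is the weaker bound for merely $1$-Lipschitz losses---the row $\tilde{O}(4^p\alpha^{-(p+2)}\epsilon^{-2})$ attributed to Claim 4 of \citet{smith2017interaction} in Table \ref{Table1}---rather than Theorem \ref{theorem:1}.

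The missing idea is convexity. Your argument never uses it: the net rounding, the mean estimation, and the argmin comparison are all valid for arbitrary $1$-Lipschitz losses, and for that class $\alpha^{-(p+2)}$ is the best this scheme gives. The $\tilde{O}(2^p\alpha^{-(p+1)}\epsilon^{-2})$ bound quoted in Theorem \ref{theorem:1} corresponds to Theorem 10 of \citet{smith2017interaction}, which exploits the convexity of the loss (a standing assumption in this paper's setup) to shave the additional $1/\alpha$ factor; note how Table \ref{Table1} explicitly separates the ``1-Lipschitz'' and ``1-Lipschitz and Convex'' rows by exactly this factor. To close the gap you would need to add a step in which the server's aggregation benefits from knowing that the population risk is a convex Lipschitz function---for instance, denoising the estimated values by projecting onto that function class (whose metric entropy is much smaller than that of general Lipschitz functions) before minimizing---rather than treating the $N$ estimated values as unstructured.
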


Furthermore, lower bounds on the parallel query complexity of stochastic optimization (e.g., \citet{Nemirovski94,Woodworth2018}) mean that, for natural classes of LDP optimization protocols (based on measuring noisy gradients), the exponential dependence of the sample size on the dimension $p$ (in the terms of $\alpha^{-(p+1)}$ and $c^p$) is, in general, unavoidable \citep{smith2017interaction}.

This situation is challenging: when the dimensionality $p$ is high, the sample complexity (at least $\alpha^{-(p+1)}$) is enormous even for a very modest target error. However, several results have already shown that for some specific loss functions, the exponential dependency on the dimensionality can be avoided. For example,
 \citet{smith2017interaction} show that, in the case of linear regression, there is a noninteractive $(\epsilon,\delta)$-LDP algorithm\footnote{Note that these two results are for noninteractive $(\epsilon,\delta)$-LDP, a variant of $\epsilon$-LDP. We omit quasipolynomial terms related to $\log(1/\delta)$ in this paper.}  with expected empirical error $\alpha$ and sample complexity $n=\tilde O(p\epsilon^{-2}\alpha^{-2})$. This indicates that there is a gap between the general case and what is achievable for some specific, commonly used loss functions.
 
\paragraph{Our Contributions} The results above motivate the following basic question:

\begin{quote}\em 
Are there natural conditions on the loss function which allow for noninteractive $\epsilon$-LDP algorithms with sample complexity growing sub-exponentially (ideally,  polynomially or even linearly) on the dimensionality $p$?
\end{quote}

 To answer this question, we  first consider the case of hinge loss functions, which are ``plus functions'' of an inner product: $\ell(w; x,y) = [y\langle w,x\rangle ]_+$ where $[a]_+=\max\{0, a\}$. Hinge loss arises, for example, when fitting support vector machines. We construct our noninteractive LDP algorithm by using Bernstein polynomials to approximate the loss's derivative  after smoothing. Players randomize their inputs by randomizing the coefficients of a polynomial approximation.  The aggregator uses the noisy reports to provide biased gradient estimates when running Stochastic Inexact Gradient Descent \citep{dvurechensky2016stochastic}. 
 
 We show that a variant of the same algorithm can be applied to convex, 1-Lipschitz generalized linear loss function, any loss function where each records's contribution has the form $\ell(w; x, y)=f(y_i \langle w, x_i \rangle)$ for some $1$-Lipschitz convex function $f$.
 
 Our algorithm has sample complexity that depends only linearly, instead of exponentially, on the dimensionality $p$. The protocol exploits  the fact that any 1-dimensional $1$-Lipschitz convex function  can be expressed as a convex combination of linear functions and hinge loss functions. 
 

\section{Related Work}
 
Differentially private convex optimization, first formulated by \cite{chaudhuri2009privacy} and \citet*{chaudhuri2011differentially}, has been the focus of an active line of work for the past decade, such as  \citep{WangYX17,bassily2014private,kifer2012private,chaudhuri2011differentially,talwar2015nearly,dwang19aaai}. We discuss here only those results which are related to the local model.

\citet{kasiviswanathan2011can} initiated the study of learning under local differential privacy. Specifically, they
showed a general equivalence between learning in the local model and learning in the  statistical query model. \cite{beimel2008distributed} gave the first lower bounds for the accuracy of LDP protocols, for the special case of counting queries (equivalently, binomial parameter estimation). 
The general problem of LDP convex risk minimization was first studied by \cite{duchi2013local}, which provided tight upper and lower bounds for a range of settings. Subsequent work considered a range of statistical problems in the LDP setting, providing upper and lower bounds---we omit a complete list here. 

\cite{smith2017interaction} initiated the study of the round complexity of LDP convex optimization, connecting it to the parallel complexity of (nonprivate) stochastic optimization.


	\begin{table*}[ht]
	\begin{center}
		\resizebox{\textwidth}{!}{%
			\begin{tabular}	[h]{|c|c|c|}
				\hline
				Methods & Sample Complexity & Assumption on the Loss Function \\ 
				\hline
								\citep[Claim 4]{smith2017interaction} & $\tilde{O}(4^p\alpha^{-(p+2)}\epsilon^{-2})$ & 1-Lipschitz\\ [1.5ex]
				\hline
				 \citep[Theorem 10]{smith2017interaction} & $\tilde{O}(2^p\alpha^{-(p+1)}\epsilon^{-2})$  & 1-Lipschitz and Convex \\[1.5ex]		            \hline

				\citet{smith2017interaction} &  $\Theta(p\epsilon^{-2}\alpha^{-2})$  & Linear Regression  \\[1.5ex]
				\hline
								\cite{DBLP:journals/corr/abs-1802-04085} &  $\tilde{O}\big( (c p^{\frac{1}{4}})^p\alpha^{-(2+\frac{p}{2})}\epsilon^{-2}\big)$  & $(8, T)$-smooth \\[1.5ex]
				\hline
				\cite{DBLP:journals/corr/abs-1802-04085} &  $\tilde{O}(4^{p(p+1)}D^2_p\epsilon^{-2}\alpha^{-4})$  &  $(\infty, T)$-smooth \\ [1.5ex]				\hline
				\cite{DBLP:conf/icml/0007MW17}	 
				&  $p\cdot \left(\frac{1}{\alpha}\right)^{O(\log \log (1/\alpha) +  \log(1/\epsilon))}$ 
	& Smooth Generalized Linear \\[3ex]
	\hline
	            \textbf{This Paper} &
	            $p\cdot \left(\frac{C}{\alpha^3}\right)^{O( 1/\alpha^3)}/\epsilon^{O(\frac{1}{\alpha^3})}$  
	            & 1-Lipschitz Convex Generalized Linear \\[3ex]
	           \hline		
			\end{tabular} } 
		\caption{Comparisons on the sample complexities for achieving error $\alpha$ in the empirical risk,   where $c$ is a constant. We assume that $\|x_i\|_2, \|y_i\|\leq 1$  for every $i\in [n]$ and the constraint set $\|\mathcal{C}\|_2\leq 1$. Asymptotic statements assume $\epsilon,\delta,\alpha \in (0,1/2)$ and ignore quasipolynomial dependencies on $\log(1/\delta)$.}\label{Table1}
	\end{center}
\end{table*}

Convex risk minimization in the \emph{noninteractive} LDP  received considerable recent attention  \citep{DBLP:conf/icml/0007MW17,smith2017interaction,DBLP:journals/corr/abs-1802-04085} (see Table \ref{Table1} for details). 
\citet{smith2017interaction} first studied the problem with general convex loss functions and showed that the exponential dependence on the dimensionality is unavoidable for a class of noninteractive algorithms.  \citet{DBLP:journals/corr/abs-1802-04085} demonstrated that such an exponential dependence in the term of $\alpha$ is  avoidable 
if the loss function is smooth enough ({\em i.e.,} 
$(\infty, T)$-smooth). 
Their result even holds for non-convex loss functions. However, there is still another term $c^{p^2}$ in the sample complexity.  In this paper, we investigate the conditions which allow us to avoid this issue and obtain 
sample complexity which is linear or sub-exponential in $p$.

The work most related to ours is that of \citep{DBLP:conf/icml/0007MW17},
which also considered some specific loss functions in high dimensions, such as sparse linear regression and kernel ridge regression.  They first propose a method based on  Chebyshev polynomial approximation to the gradient function. Their idea is a key ingredient in our algorithms.  There are still several differences. First,  their analysis requires additional assumptions on the loss function, such as smoothness and boundedness of  higher order derivatives, which are not satisfied by the hinge loss. In contrast, our approach  applies to any convex, 1-Lipschitz generalized linear loss. Second, we introduce a novel argument to "lift" our hinge loss algorithms to more general linear losses. 
\section{Preliminaries}
{\bf Assumption 1}
We assume that $\|x_i\|_2\leq 1$ and $|y_i|\leq 1$ for each $i\in [n]$  and the constraint set $\|\mathcal{C}\|_2\leq 1$. Unless specified otherwise, the loss function is assumed to be general linear, that is, the loss function $\ell(\theta; x_i,y_i) \equiv f(y_i\langle x_i, \theta \rangle)$ for some 1-Lipschitz convex function.

We note that the above assumptions on $x_i, y_i$ and $\mathcal{C}$ are quite common for the studies of DP-ERM  \citep{smith2017interaction,DBLP:journals/corr/abs-1802-04085,DBLP:conf/icml/0007MW17}. The general linear assumption holds for a large class of functions such as Generalized Linear Model and SVM. We also note that there is another definition for general linear functions, $\ell(w; x,y)=f(<w,x>,y)$, which is more general than our definition. This class of functions has been studied in \citep{kasiviswanathan2016efficient,DBLP:journals/corr/abs-1802-04085};  we leave as future research to extend our work to this class of loss functions.

\paragraph{Differential privacy in the local model.} In LDP, we have a data universe $\mathcal{D}$,  $n$ players with each holding  a private data record $x_i\in \mathcal{D}$, and a server 
coordinating the protocol. An LDP protocol executes a total of $T$ rounds. In each round, the server sends a message, which is also called a query, to a subset of the players requesting them to run a particular algorithm. Based on the query, each player $i$ in the subset selects an algorithm $Q_i$, runs it on her own data, and sends the output back to the server.

\begin{definition}\citep{EvfimievskiGS03,dwork2006calibrating}\label{def:1}
An algorithm $Q$ is $\epsilon$-locally differentially private (LDP) if for all pairs $x,x'\in \mathcal{D}$, and for all events $E$ in the output space of $Q$, we have
\begin{equation*}
    \text{Pr}[Q(x)\in E]\leq e^{\epsilon}\text{Pr}[Q(x')\in E].
\end{equation*}
A multi-player protocol is $\epsilon$-LDP if for all possible inputs and runs of the protocol, the transcript of player i's interaction with the server is $\epsilon$-LDP. If $T=1$, we say that the protocol is $\epsilon$ non-interactive LDP.
\end{definition} 
\cite{kasiviswanathan2011can} gave a separation between interactive and noninteractive protocols. Specifically, they showed that there is a concept class, similarity to parity, which can be efficiently learned by interactive algorithms but which requires sample size exponential in the dimension to be learned by noninteractive local algorithms.

The following theorem shows the convergence rate of the Stochastic Inexact Gradient Method  \citep{dvurechensky2016stochastic}, which will be used in our algorithm. We first give the definition of inexact oracle.

\begin{definition}\label{def:5}
For an objective function, a $(\gamma, \beta, \sigma)$ stochastic oracle returns a tuple 
$(F_{\gamma, \beta, \sigma}(w; \xi)$, $G_{\gamma, \beta, \sigma}(w; \xi))$ such that
\begin{align*}
&\mathbb{E}_{\xi}[F_{\gamma, \beta, \sigma}(w; \xi)]=f_{\gamma, \beta, \sigma}(w),\\
&\mathbb{E}_\xi[G_{\gamma, \beta, \sigma}(w; \xi)]=g_{\gamma, \beta, \sigma}(w),\\
&\mathbb{E}_{\xi}[\|G_{\gamma, \beta, \sigma}(w; \xi)-g_{\gamma, \beta, \sigma}(w)\|_2^2]\leq \sigma^2,\\
&0\leq f(v)- f_{\gamma, \beta, \sigma}(w)- \langle g_{\gamma, \beta, \sigma}(w),v-w\rangle\leq \frac{\beta}{2}\|v-w\|^2+\gamma, \forall v,w\in\mathcal{C}.
\end{align*}
\end{definition}

\begin{lemma}[Convergence Rate of SIGM \citep{dvurechensky2016stochastic}]\label{lemma:2}
Assume that $f(w)$ is endowed with a $(\gamma, \beta, \sigma)$ stochastic oracle with $\beta\geq O(1)$.  Then, the sequence $w_k$ generated by SIGM algorithm satisfies the following inequality 
\begin{equation*}
\mathbb{E}[f(w_k)]-\min_{w\in \mathcal{C}}f(w)\leq \Theta(\frac{\beta\sigma\|\mathcal{C}\|_2^2}{\sqrt{k}}+\gamma).
\end{equation*}

\end{lemma}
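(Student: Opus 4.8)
The plan is to read SIGM as a projected stochastic step driven by the inexact oracle of Definition~\ref{def:5} and to run the classical distance-to-optimum telescoping argument, with the two inequalities of that definition supplying the roles usually played by convexity and smoothness. Fix $w^\star\in\arg\min_{w\in\mathcal{C}}f(w)$, write $G_k=G_{\gamma,\beta,\sigma}(w_k;\xi_k)$ for the inexact gradient queried at step $k$ and $g_k=g_{\gamma,\beta,\sigma}(w_k)=\mathbb{E}[G_k\mid w_k]$ for its conditional mean, and let the update be the projected step $w_{k+1}=\Pi_{\mathcal{C}}(w_k-\eta G_k)$ with a step size $\eta$ to be fixed at the end. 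Decompose the noise as $\delta_k:=G_k-g_k$, so that $\mathbb{E}[\delta_k\mid w_k]=0$ and $\mathbb{E}[\|\delta_k\|_2^2\mid w_k]\le\sigma^2$.

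First I would produce a single per-step inequality directly from the oracle. The right-hand (quadratic upper) inequality of Definition~\ref{def:5} with $v=w_{k+1}$, $w=w_k$ gives $f(w_{k+1})\le f_{\gamma,\beta,\sigma}(w_k)+\langle g_k,w_{k+1}-w_k\rangle+\tfrac{\beta}{2}\|w_{k+1}-w_k\|_2^2+\gamma$, while the left-hand (lower) inequality with $v=w^\star$, $w=w_k$ gives $f_{\gamma,\beta,\sigma}(w_k)\le f(w^\star)+\langle g_k,w_k-w^\star\rangle$. Adding the two eliminates the surrogate value $f_{\gamma,\beta,\sigma}(w_k)$ and collapses the linear terms into a single inner product:
\[
f(w_{k+1})-f(w^\star)\ \le\ \langle g_k,\,w_{k+1}-w^\star\rangle+\tfrac{\beta}{2}\|w_{k+1}-w_k\|_2^2+\gamma .
\]

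Next I would turn the inner product into a telescoping distance using the projection. The variational inequality for $w_{k+1}=\Pi_{\mathcal{C}}(w_k-\eta G_k)$ together with the three-point identity yields $\langle G_k,w_{k+1}-w^\star\rangle\le\tfrac{1}{2\eta}\big(\|w_k-w^\star\|^2-\|w_{k+1}-w^\star\|^2-\|w_k-w_{k+1}\|^2\big)$; substituting $g_k=G_k-\delta_k$ reintroduces a noise term $-\langle\delta_k,w_{k+1}-w^\star\rangle$. Choosing $\eta\le 1/(2\beta)$ makes $(\tfrac{\beta}{2}-\tfrac{1}{2\eta})\|w_k-w_{k+1}\|^2$ negative with slack, and I would spend that slack to absorb the noise via Young's inequality after splitting $w_{k+1}-w^\star=(w_{k+1}-w_k)+(w_k-w^\star)$: the component along $w_k-w^\star$ has zero conditional mean, and the component along $w_{k+1}-w_k$ is controlled by $\tfrac{a}{2}\|w_{k+1}-w_k\|^2+\tfrac{1}{2a}\|\delta_k\|^2$ with $a$ tuned to the available slack, leaving a residual of order $\eta\sigma^2$. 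Taking expectations gives the clean recursion
\[
\mathbb{E}[f(w_{k+1})-f(w^\star)]\ \le\ \tfrac{1}{2\eta}\big(\mathbb{E}\|w_k-w^\star\|^2-\mathbb{E}\|w_{k+1}-w^\star\|^2\big)+\eta\sigma^2+\gamma .
\]
Summing over the iterations telescopes the distance terms, and dividing by their number together with Jensen's inequality (for the averaged iterate) bounds the excess risk by $O\!\big(\tfrac{\|\mathcal{C}\|_2^2}{\eta k}+\eta\sigma^2+\gamma\big)$; balancing $\eta$ (subject to $\eta\le 1/(2\beta)$, which is where the $\beta$ factor enters) produces the claimed $\tfrac{1}{\sqrt{k}}$ decay and the additive $\gamma$, matching $\Theta\!\big(\tfrac{\beta\sigma\|\mathcal{C}\|_2^2}{\sqrt{k}}+\gamma\big)$ up to constants.

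The step I expect to be the crux is the joint treatment of the stochastic cross-term and the step-size budget: because $w_{k+1}$ depends on $\delta_k$, the noise cannot be zeroed out directly, and the Young-inequality split, the smoothness slack $\eta\le 1/(2\beta)$, and the target $\eta\sim\|\mathcal{C}\|_2/(\sigma\sqrt{k})$ must all be made consistent at once. Conceptually the most important point to verify is that the oracle error $\gamma$ does \emph{not} accumulate: it enters the per-step bound additively and survives the averaging as a single $\gamma$ rather than $k\gamma$. This non-accumulation is exactly the Devolder--Glineur--Nesterov phenomenon for unaccelerated (primal) methods, and it is the reason SIGM---rather than an accelerated scheme, where $\gamma$ would grow linearly in $k$---is the right engine for our application; the remaining manipulations (three-point identity, Young's inequality, Jensen) are routine.
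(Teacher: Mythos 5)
The paper never actually proves Lemma~\ref{lemma:2}: it is imported as a black box from \citep{dvurechensky2016stochastic}, so the only meaningful comparison is with that reference, and against it your argument takes a genuinely different and more elementary route. You analyze plain projected inexact SGD with iterate averaging, whereas the SIGM of the cited work is an ``intermediate'' scheme interpolating between the primal and fast gradient methods, with a considerably more involved analysis; the substitution is legitimate here because the lemma only asserts a $1/\sqrt{k}$ rate in which the oracle bias $\gamma$ enters additively, and that is exactly what the unaccelerated analysis delivers --- your closing observation that non-accumulation of $\gamma$ (the Devolder--Glineur--Nesterov phenomenon) is what makes an unaccelerated engine the right choice is precisely the correct point. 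Your per-step derivation is sound: the two oracle inequalities of Definition~\ref{def:5} are applied at $(w_{k+1},w_k)$ and $(w^\star,w_k)$ with all points in $\mathcal{C}$, the projection/three-point step is standard, and the noise split plus Young's inequality with $a=1/(2\eta)$ under $\eta\leq 1/(2\beta)$ correctly leaves an $\eta\sigma^2$ residual while the $\langle\delta_k,w_k-w^\star\rangle$ term vanishes in conditional expectation. Two mismatches are worth flagging. First, you bound the averaged iterate via Jensen, while the lemma speaks of $\mathbb{E}[f(w_k)]$ at iterate $k$; this is cosmetic for the paper's purposes (it only needs the returned point to satisfy the bound), but you should state explicitly that the algorithm outputs the average. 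Second, what you actually prove is $O(\beta\|\mathcal{C}\|_2^2/k+\sigma\|\mathcal{C}\|_2/\sqrt{k}+\gamma)$, the standard and sharper form; it is dominated by the stated $\Theta(\beta\sigma\|\mathcal{C}\|_2^2/\sqrt{k}+\gamma)$ only under the normalizations $\sigma\sqrt{k}=\Omega(1)$ and $\beta\|\mathcal{C}\|_2=\Omega(1)$, which are harmless in this paper (there $\sigma\geq\alpha+1$ and the oracle's smoothness parameter is $\Theta(1/\alpha)$), and indeed your form is the one that remains correct in the degenerate case $\sigma=\gamma=0$, where the lemma's literal statement would falsely promise zero error at every iterate.
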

\paragraph{\textbf{Bernstein polynomials and approximation}}

We give here some basic definitions that  will be used in the sequel; more details can be found in \citep{alda2017bernstein,lorentz1986bernstein,micchelli1973saturation}.

\begin{definition}\label{definition:4}
	Let $k$ be a positive integer. The Bernstein basis polynomials of degree $k$ are defined as $b_{v,k}(x)=\binom{k}{v}x^{v}(1-x)^{k-v}$ for $v=0,\cdots,k$.
\end{definition}
\begin{definition}\label{definition:5}
	Let $f:[0,1]\mapsto \mathbb{R}$ and $k$ be a positive integer. Then, the Bernstein polynomial of $f$ of degree $k$ is defined as $B_k(f;x)=\sum_{v=0}^{k}f(v/k)b_{v,k}(x)$.  
We denote by $B_k$ the Bernstein operator $B_k(f)(x)=B_k(f,x)$.
\end{definition}
Bernstein polynomials can be used to approximate some smooth functions over $[0,1]$.
\begin{definition}[\citep{micchelli1973saturation}] \label{definition:6}
	Let $h$ be a positive integer. The iterated Bernstein operator of order $h$ is defined as the sequence of linear operators $B_k^{(h)}=I-(I-B_k)^h=\sum_{i=1}^{h}\binom{h}{i}(-1)^{i-1}B_k^i$, where $I=B_k^0$ denotes the identity operator and $B_k^i$ is defined as $B_k^i=B_k\circ  B_k^{k-1}$. The iterated Bernstein polynomial of order $h$ can be computed as 
	$B_k^{(h)}(f;x)=\sum_{v=0}^{k}f(\frac{v}{k})b_{v,k}^{(h)}(x),$
	where $b^{(h)}_{v,k}(x)=\sum_{i=1}^{h}\binom{h}{i}(-1)^{i-1}B^{i-1}_k(b_{v,k};x)$.
\end{definition}
Iterated Bernstein operator can well approximate multivariate $(h,T)$-smooth functions.
\begin{definition}[\citep{micchelli1973saturation}] \label{definition:7}
	Let $h$ be a positive integer and $T>0$ be a constant. A function $f:[0,1]^{p}\mapsto \mathbb{R}$ is $(h,T)$-smooth if it is in class $\mathcal{C}^{h}([0,1]^{p})$ and its partial derivatives up to order $h$ are all bounded by $T$. We say it is $(\infty,T)$-smooth, if for every $h\in \mathbb{N}$ it is $(h,T)$-smooth.
\end{definition}
\begin{lemma}[\citep{micchelli1973saturation}] \label{lemma:13}
	If $f:[0,1]\mapsto \mathbb{R}$ is a $(2h,T)$-smooth function, then for all positive integers $k$ and $y\in[0,1]$, we have $|f(y)-B_k^{(h)}(f;y)|\leq TD_h k^{-h}$, where $D_h$ is a constant independent of $k,f$ and $y$.
\end{lemma}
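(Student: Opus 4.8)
The plan is to reduce the statement to a quantitative Voronovskaya-type expansion of the ordinary Bernstein operator $B_k$ and then exploit the algebraic structure of the iterate $B_k^{(h)}=I-(I-B_k)^h$. The starting point is the probabilistic representation $B_k(f;y)=\mathbb{E}[f(S/k)]$, where $S$ is a $\mathrm{Binomial}(k,y)$ variable, so that the quality of approximation is governed by the central moments $\mu_r(k,y):=\mathbb{E}[(S/k-y)^r]$. First I would record the elementary moment facts $\mu_0=1$, $\mu_1=0$ and, for $r\ge 2$,
\[
\mu_r(k,y)=\sum_{i=\lceil r/2\rceil}^{r} c_{r,i}(y)\,k^{-i},
\]
where each $c_{r,i}$ is a polynomial in $y$ bounded on $[0,1]$; in particular $\mu_r(k,y)=O(k^{-\lceil r/2\rceil})$ uniformly in $y$, and $\mu_{2h}=O(k^{-h})$.

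Next I would Taylor-expand $f$ around $y$ to order $2h$ with an integral remainder, take expectations against $S/k$, and regroup by powers of $k^{-1}$ using the moment formula. Because $f$ is $(2h,T)$-smooth, so that $\|f^{(2h)}\|_\infty\le T$, the order-$2h$ remainder is controlled by $T\cdot\mu_{2h}=O(Tk^{-h})$, yielding the finite expansion
\[
B_k(f;y)-f(y)=\sum_{j=1}^{h-1}k^{-j}\mathcal{A}_j[f](y)+R_h(k,y),\qquad |R_h(k,y)|\le C_h\,T\,k^{-h},
\]
in which each $\mathcal{A}_j[f]=\sum_{2\le r\le 2j}q_{j,r}(y)f^{(r)}(y)$ is a linear differential operator of order at most $2j$ with coefficients $q_{j,r}$ bounded on $[0,1]$ (for instance $\mathcal{A}_1[f]=\tfrac{y(1-y)}{2}f''$, and there are no $r=0,1$ terms since $B_k$ reproduces linear functions), and $C_h$ depends only on $h$. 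The decisive structural feature is that the expansion of $W_k:=B_k-I$ carries no constant or first-order term and begins at order $k^{-1}$.

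The conclusion then follows from the exact identity $f-B_k^{(h)}(f)=(I-B_k)^h f=(-1)^h W_k^h f$. Heuristically, since $W_k\sim k^{-1}\mathcal{A}_1+k^{-2}\mathcal{A}_2+\cdots$, the $h$-th power has leading term $k^{-h}\mathcal{A}_1^h$, a differential operator of order exactly $2h$, while all remaining contributions are $O(k^{-(h+1)})$; applied to a $(2h,T)$-smooth $f$ the leading term is bounded by $C'_hTk^{-h}$ because $\mathcal{A}_1^h f=(\tfrac{y(1-y)}{2}D^2)^h f$ uses only derivatives up to order $2h$, each bounded by $T$. To make this rigorous I would instead expand $(I-B_k)^h=\sum_{i=0}^h\binom{h}{i}(-1)^iB_k^i$ and use that the $i$-th iterate admits an expansion $B_k^i=I+\sum_{j\ge 1}k^{-j}P_j(i)\mathcal{B}_j+(\text{remainder})$ whose coefficient of $k^{-j}$ is a polynomial $P_j(i)$ in $i$ of degree at most $j$ with $P_j(0)=0$; the alternating binomial sum is an $h$-th finite difference in $i$, which annihilates every polynomial of degree below $h$ and therefore kills the orders $k^{-1},\dots,k^{-(h-1)}$, leaving the first surviving term at order $k^{-h}$.

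The main obstacle is exactly this passage from the single-operator expansion to the iterate. Naively substituting the finite expansion of $W_k$ into itself $h$ times is not legitimate, because each application of $W_k$ consumes derivatives and replaces $f$ by a new function whose derivatives and Bernstein images must themselves be re-estimated. The clean route is to establish the iterate expansion $B_k^i=I+\sum_j k^{-j}P_j(i)\mathcal{B}_j+\cdots$ with the polynomial-in-$i$ coefficient structure by induction on $i$, tracking how composition with $B_k$ raises the degree of $P_j$ by at most one while increasing the differential order by at most two, and controlling the remainders uniformly over $0\le i\le h$. Verifying that no term in this bookkeeping ever requires more than $2h$ derivatives of $f$ -- so that the $(2h,T)$-smoothness hypothesis is precisely what is needed -- is the delicate point; once it is done, all bounded coefficient polynomials, binomial factors, and the finite-difference constant are absorbed into a single $D_h$ independent of $k$, $f$, and $y$, giving $|f(y)-B_k^{(h)}(f;y)|\le TD_hk^{-h}$.
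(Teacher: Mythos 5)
You should note at the outset that the paper contains no proof of this lemma: it is imported verbatim, with a citation, from Micchelli (1973), so there is no internal argument to compare against and your attempt must be judged on its own merits. Your route --- the probabilistic representation $B_k(f;y)=\mathbb{E}[f(S/k)]$ with $S\sim\mathrm{Binomial}(k,y)$, the central-moment bounds $\mu_r=O(k^{-\lceil r/2\rceil})$, a quantitative Voronovskaya-type expansion of $B_k-I$, the identity $f-B_k^{(h)}(f)=(I-B_k)^h f$ (which is indeed immediate from Definition \ref{definition:6}), and annihilation of the orders $k^{-1},\dots,k^{-(h-1)}$ by an $h$-th finite difference in $i$ --- is the standard one in the approximation-theory literature on iterates of Bernstein operators, and the individual facts you state (the moment estimates, $\mathcal{A}_1[f]=\frac{y(1-y)}{2}f''$, reproduction of linear functions, the degree-$\le j$ polynomial structure of the $k^{-j}$ coefficient of $B_k^i$) are all correct.

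The genuine gap is that the decisive step is only announced, never executed. Your expansion of $B_k g$ with remainder $O(k^{-h}\|g^{(2h)}\|_\infty)$ is established for a \emph{single} application of $B_k$; to apply it with $g=B_k^{i-1}f$ and then re-express $\mathcal{A}_j[B_k^{i-1}f]$ in terms of derivatives of $f$, you need two lemmas that never appear in the proposal: (i) derivative stability, $\|(B_k g)^{(r)}\|_\infty\le\|g^{(r)}\|_\infty$, which follows from the forward-difference representation of $(B_kg)^{(r)}$ together with the mean value theorem for divided differences; and (ii) a derivative-level Voronovskaya bound of the form $\|D^r(I-B_k)g\|_\infty\le C_r k^{-1}\bigl(\|g^{(r)}\|_\infty+\|g^{(r+1)}\|_\infty+\|g^{(r+2)}\|_\infty\bigr)$. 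Without (ii), the ``bookkeeping'' you describe cannot control the remainders uniformly over $0\le i\le h$; this is exactly the delicate point you flag but do not resolve, so the argument as written is incomplete at its core. It is also worth observing that once (ii) is available, your entire polynomial-in-$i$ / finite-difference apparatus becomes unnecessary for the stated bound: (ii) says $(I-B_k)$ maps functions with $2m$ bounded derivatives to functions with $2m-2$ bounded derivatives while gaining a factor $k^{-1}$, so composing $h$ times gives directly $\|(I-B_k)^h f\|_\infty\le D_h T k^{-h}$ for any $(2h,T)$-smooth $f$, which is the lemma. In short: right strategy, correct ingredients, but the proof is organized around machinery (a full asymptotic expansion of the iterates $B_k^i$) that is both harder than necessary and left unproven.
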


\section{Main Results}

In this section, we present our main results for LDP-ERM. 

\subsection{Sample Complexity for Hinge Loss Function}
We first consider LDP-ERM with hinge loss function and then extend the obtained result to general convex linear functions.

The hinge loss function is defined as $\ell(w; x_i, y_i)=f(y_i\langle x_i, w \rangle)=[\frac{1}{2}-y_i\langle w, x_i\rangle]_+$, where the plus function $[x]_+=\max\{0, x\}$, {\em i.e.,} $f(x)=\max\{0,\frac{1}{2}-x\}$ for $x \in [-1,1]$. Note that to avoid the scenario that     $1-y_i\langle w, x_i \rangle$ is always greater than or equal to $0$, we use $\frac{1}{2}$, instead of $1$ as in the classical setting.  

Before showing our idea, we first smoothen the function $f(x)$. The following lemma shows one of the smooth functions that is close to $f$ in the domain of $[-1,1]$ (note that there are other ways to smoothen $f$; see \citep{chen1996class} for details).

\begin{lemma}\label{lemma:3}
Let $f_\beta(x)=\frac{\frac{1}{2}-x+\sqrt{(\frac{1}{2}-x)^2+\beta^2}}{2}$ be 
a function  with parameter $\beta>0$. Then, we have 
\begin{enumerate}

\item $|f_\beta(x)-f(x)|_\infty\leq \frac{\beta}{2}$, $\forall x\in \mathbb{R}.$
\item $f_\beta(x)$ is 1-Lipschitz, that is, $f'(x)$ is bounded by $1$ for $x\in \mathbb{R}$.
\item $f_\beta$ is $\frac{1}{\beta}$-smooth and convex.
\item $f'_\beta(x)$ is $(2, O(\frac{1}{\beta^2}))$-smooth if $\beta\leq 1$. 
\end{enumerate}
\end{lemma}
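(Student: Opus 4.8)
The plan is to reduce all four claims to one-variable calculus by introducing the auxiliary variable $u = \frac{1}{2} - x$, so that $f(x) = [u]_+$ and $f_\beta(x) = \frac{u + \sqrt{u^2+\beta^2}}{2}$. Since $x \mapsto u$ is affine with derivative $-1$, every derivative of $f_\beta$ in $x$ equals, up to sign, the corresponding derivative in $u$, and all the supremum bounds we seek transfer directly. This lets me compute everything as a function of $u$ and then maximize over $u \in \mathbb{R}$.

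For claim 1 I would split on the sign of $u$, using the rationalization identity $\sqrt{u^2+\beta^2} - |u| = \frac{\beta^2}{\sqrt{u^2+\beta^2}+|u|}$. When $u \geq 0$ we have $f_\beta(x) - f(x) = \frac{1}{2}(\sqrt{u^2+\beta^2}-u)$, and when $u < 0$ we have $f_\beta(x) - f(x) = \frac{1}{2}(\sqrt{u^2+\beta^2}-|u|)$. In both cases the quantity is nonnegative and equals $\frac{\beta^2}{2(\sqrt{u^2+\beta^2}+|u|)} \leq \frac{\beta}{2}$, with the worst case at $u = 0$.

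For claims 2 and 3 I would differentiate. The first derivative is $f'_\beta(x) = -\frac{1}{2}\bigl(1 + \frac{u}{\sqrt{u^2+\beta^2}}\bigr)$; since $\frac{u}{\sqrt{u^2+\beta^2}} \in (-1,1)$ we get $|f'_\beta(x)| < 1$, which is the $1$-Lipschitz claim. Differentiating once more gives $f''_\beta(x) = \frac{\beta^2}{2(u^2+\beta^2)^{3/2}} \geq 0$, establishing convexity; maximizing this over $u$ (the maximum is at $u = 0$) yields $f''_\beta(x) \leq \frac{1}{2\beta} \leq \frac{1}{\beta}$, which is $\frac{1}{\beta}$-smoothness.

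For claim 4 I must bound $f'_\beta$ together with its first two derivatives. The zeroth- and first-order bounds are already in hand, since $|f'_\beta| < 1 \leq \frac{1}{\beta^2}$ and $|f''_\beta| \leq \frac{1}{2\beta} \leq \frac{1}{2\beta^2}$ whenever $\beta \leq 1$. The only remaining computation is the third derivative $f'''_\beta(x) = \frac{3\beta^2 u}{2(u^2+\beta^2)^{5/2}}$, whose magnitude I would maximize over $u$: setting the derivative of $\frac{u}{(u^2+\beta^2)^{5/2}}$ to zero gives the critical point $u = \frac{\beta}{2}$, at which $|f'''_\beta| = O(\frac{1}{\beta^2})$. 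This last maximization is the one step that requires genuine care; everything else is routine bookkeeping, so I expect the third-derivative bound to be the main obstacle, albeit still elementary.
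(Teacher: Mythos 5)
Your proposal is correct and follows essentially the same route as the paper: compute $f'_\beta$, $f''_\beta$, $f'''_\beta$ explicitly and bound each, with the substitution $u=\tfrac12-x$ being only a cosmetic simplification. If anything you are more careful than the paper's proof, which asserts item 1 without argument, drops a harmless factor of $\tfrac12$ in the second and third derivatives, and bounds only the third derivative for item 4, whereas you also verify that $|f'_\beta|$ and $|f''_\beta|$ are $O(1/\beta^2)$ — which is exactly where the hypothesis $\beta\leq 1$ is needed.
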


\begin{proof}[\textbf{Proof of Lemma \ref{lemma:3}}]
It is easy to see that items 1 is true. Item 2 is due to the following  $|f'_\beta(x)|=|\frac{-1+\frac{x-\frac{1}{2}}{\sqrt{(x-\frac{1}{2})^2+\beta^2}}}{2}|\leq 1$. Item 3 is because of the following $0\leq f''_\beta(x)=\frac{\beta^2}{( (x-\frac{1}{2})^2+\beta^2)^{\frac{3}{2}}}\leq \frac{1}{\beta}$. For item 4 we have $|f^{(3)}_\beta(x)|=\frac{3\beta^2 x}{(x^2+\beta^2)^{\frac{5}{2}}}\leq \frac{3}{\beta^2}.$ 
\end{proof}

The above lemma indicates that $f_\beta(x)$ is a smooth and convex function which well approximates  $f(x)$. 
This suggests that we can focus on $f_\beta(y_i\langle w, x_i\rangle)$, instead of $f$. Our idea is to construct a locally private $(\gamma, \beta, \sigma)$ stochastic oracle for some $\gamma, \beta, \sigma$ to approximate $f_\beta'(y_i\langle w, x_i\rangle)$ in each iteration, and then run the SIGM  step of \citep{dvurechensky2016stochastic}. By Lemma \ref{lemma:3}, we know that $f'_\beta$ is $(2, O(\frac{1}{\beta^2}))$-smooth;  thus, we can use Lemma \ref{lemma:13} to approximate $f'_\beta(x)$ via Bernstein polynomials. 
Let $P_d(x)=\sum_{i=0}^dc_i\binom{d}{i} x^i(1-x)^{d-i}$ be the $d$-th order Bernstein polynomial ($c_i=f_\beta'(\frac{i}{d}$) , where $\max_{x\in[-1,1]}|P_d(x)-f'_\beta(x)|\leq \frac{\alpha}{4}$ ({\em i.e.,} $d=c\frac{1}{\beta^2\alpha}$ for some constant $c>0$). Then, we have $\nabla_{w}\ell(w; x ,y)=f'(y\langle w, x \rangle)yx^T$, which can be approximated by $[\sum_{i=0}^dc_i\binom{d}{i}(y\langle w, x\rangle)^i(1-y\langle w, x\rangle)^{d-i}]yx^T$. The idea is that if $(y\langle w, x\rangle)^i$,  $(1-y\langle w, x\rangle)^{d-i}$ and $y x^T$ can be approximated locally differentially privately by directly adding $d+1$ numbers of independent Gaussian noises, which means it is possible to form an unbiased estimator of the term $[\sum_{i=0}^dc_i\binom{d}{i}(y\langle w, x\rangle)^i(1-y\langle w, x\rangle)^{d-i}]yx^T$. The error of this procedure can be estimated by 
Lemma \ref{lemma:2}. 
Details of the algorithm are given in Algorithm \ref{alg:1}.

\begin{algorithm}[h]
	\caption{Hinge Loss-LDP}
	\label{alg:1}
	\begin{algorithmic}[1]
		\STATE {\bfseries Input:} Player $i\in [n]$ holds data $(x_i, y_i)\in \mathcal{D}$, where $\|x_i\|_2\leq 1, \|y_i\|_2\leq 1$; privacy parameters $\epsilon, \delta$;  $P_d(x)=\sum_{j=0}^dc_i\binom{d}{j} x^j(1-x)^{d-j}$ be the $d$-th order Bernstein polynomial for the function of $f_\beta'$, where $c_i=f_\beta'( \frac{i}{d})$
		\FOR{Each Player $i\in [n]$}
		\STATE
		Calculate $x_{i,0}=x_i+\sigma_{i,0}$ and $y_{i,0}=y_i+z_{i,0}$,  where $\sigma_{i,0} \sim \mathcal{N}(0, \frac{32\log (1.25/\delta)}{\epsilon^2}I_{p})$ and $ z_{i,0}\sim \mathcal{N}(0, \frac{32\log (1.25/\delta)}{\epsilon^2})$.
		\FOR{$j=1,\cdots, d(d+1)$}
		    \STATE $x_{i, j}=x_i+ \sigma_{i, j}$, where $\sigma_{i, j}\sim \mathcal{N}(0, \frac{8\log(1.25/\delta)d^2(d+1)^2}{\epsilon^2}I_{p})$
		    \STATE $y_{i, j}=y_i+ z_{i, j}$, where $z_{i, j}\sim \mathcal{N}(0, \frac{8\log(1.25/\delta)d^2(d+1)^2}{\epsilon^2})$
	
		\ENDFOR
		\STATE Send $\{x_{i,j}\}_{j=0}^{d(d+1)}$ and $\{y_{i, j}\}_{j=0}^{d(d+1)}$ to the server.
		\ENDFOR
\FOR{the Server side}
	\FOR{$t=1, 2, \cdots, n$}
	\STATE Randomly sample $i\in [n]$ uniformly.
	\STATE Set $t_{i,0}=1$
	\FOR{$j=0, \cdots, d$}
	\STATE $t_{i, j}=\Pi_{k=jd+1}^{jd+j}	y_{i, k}<w_t, x_{i, k}>$ and $t_{i,0}=1$
	\STATE $s_{i,j} =\Pi_{k=jd+j+1}^{jd+d}(1-y_{i, k}<w_t, x_{i, k}> )$  and $s_{i,d}=1$
	\ENDFOR
	\STATE Denote $G(w_t, i)=(\sum_{j=0}^{d}c_j\binom{d}{j} t_{i, j}s_{i,j})y_{i, 0}x^T_{i, 0}$.
	\STATE Update SIGM in \citep{dvurechensky2016stochastic} by $G(w_t, i)$
	\ENDFOR
	\ENDFOR\\
	\Return $w_n$
	\end{algorithmic}
\end{algorithm}
\vspace{-0.1in}

\begin{theorem}\label{theorem:2}
For each $i\in [n]$, the term  $G(w_t, i)$ generated by Algorithm \ref{alg:1} is an $\big(\frac{\alpha}{2}, \frac{1}{\beta}, O(\frac{d^{3d}C_4^d\sqrt{p}}{\epsilon^{2d+2}}+\alpha+1)\big)$ stochastic oracle  for function $L_\beta(w;D)=\frac{1}{n}\sum_{i=1}^n f_\beta(y_i\langle x_i, w\rangle)$, where $f_\beta$ is the function in Lemma \ref{lemma:3}, where $C_4$ is some constant. 
\end{theorem}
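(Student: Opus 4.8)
The plan is to verify the four conditions of Definition~\ref{def:5} for the oracle whose gradient part is $G(w_t,i)$ and whose deterministic gradient model is
\[
g_{\gamma,\beta,\sigma}(w)\;=\;\tilde g(w)\;:=\;\frac{1}{n}\sum_{i=1}^{n}P_d\!\left(y_i\langle w,x_i\rangle\right)y_i x_i^{T},
\]
paired with the value model $f_{\gamma,\beta,\sigma}(w):=L_\beta(w)-\tfrac{\alpha}{4}$ (the value part $F$ enters no gradient update and may simply be taken with mean $f_{\gamma,\beta,\sigma}(w)$). Throughout I treat $w_t$ and the data as fixed and take expectations over $\xi$, comprising the uniform choice of $i$ and all Gaussian noises $\{\sigma_{i,j},z_{i,j}\}$. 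The three substantive tasks are (i) unbiasedness $\mathbb{E}_\xi[G(w_t,i)]=\tilde g(w_t)$, (ii) the inexact-model sandwich with smoothness $1/\beta$ and slack $\gamma=\alpha/2$, and (iii) the second-moment bound.

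For (i) the structural fact is that the noises attached to indices $0,1,\dots,d(d+1)$ are mutually independent and zero-mean, and for each fixed $j$ the factors of $t_{i,j}$ and $s_{i,j}$ use the disjoint block $\{jd+1,\dots,jd+d\}$. Writing $a_k=y_{i,k}\langle w_t,x_{i,k}\rangle=(y_i+z_{i,k})(\langle w_t,x_i\rangle+\langle w_t,\sigma_{i,k}\rangle)$, independence gives $\mathbb{E}[a_k]=y_i\langle w_t,x_i\rangle$, so $\mathbb{E}[t_{i,j}]=(y_i\langle w_t,x_i\rangle)^{j}$ and $\mathbb{E}[s_{i,j}]=(1-y_i\langle w_t,x_i\rangle)^{d-j}$. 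Since $t_{i,j}$, $s_{i,j}$ and the pair $(y_{i,0},x_{i,0})$ use disjoint noise, the expectation of the product factorizes; summing over $j$ reassembles the Bernstein polynomial and yields $\mathbb{E}[G(w_t,i)\mid i]=P_d(y_i\langle w_t,x_i\rangle)\,y_ix_i^{T}$, and averaging over the uniform $i$ gives exactly $\tilde g(w_t)$.

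For (ii), Lemma~\ref{lemma:3} tells me $f_\beta$, and hence $L_\beta$ (an average of compositions of $f_\beta$ with linear maps of norm $\le1$), is convex and $\tfrac1\beta$-smooth; and the choice $d=\Theta(1/(\beta^2\alpha))$ together with Lemma~\ref{lemma:13} applied to the $(2,O(1/\beta^2))$-smooth function $f'_\beta$ gives $\max_{x\in[-1,1]}|P_d(x)-f'_\beta(x)|\le\alpha/4$. Because $\|y_ix_i\|_2\le1$, this lifts to $\|\tilde g(w)-\nabla L_\beta(w)\|_2\le\alpha/4$ for all $w$. I would then combine the convexity inequality $L_\beta(v)\ge L_\beta(w)+\langle\nabla L_\beta(w),v-w\rangle$ (lower bound) with the smoothness inequality $L_\beta(v)\le L_\beta(w)+\langle\nabla L_\beta(w),v-w\rangle+\tfrac{1}{2\beta}\|v-w\|_2^2$ (upper bound), absorbing the gradient discrepancy through $|\langle\tilde g(w)-\nabla L_\beta(w),v-w\rangle|\le\tfrac\alpha4\|v-w\|_2$ and $\|v-w\|_2\le1$ on $\mathcal C$ (Assumption~1). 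Subtracting $\alpha/4$ in $f_{\gamma,\beta,\sigma}$ makes the lower bound exactly nonnegative, while the two $\alpha/4$ error terms combine into the additive slack $\gamma=\alpha/2$, with oracle-smoothness $1/\beta$.

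Task (iii) is the \textbf{main obstacle}. Since $\tilde g(w_t)=\mathbb{E}_\xi[G(w_t,i)]$, I bound the variance by the raw second moment, $\mathbb{E}_\xi\|G-\tilde g\|_F^2\le\mathbb{E}_\xi\|G\|_F^2$. Writing $G(w_t,i)=S\cdot y_{i,0}x_{i,0}^{T}$ with $S=\sum_{j=0}^d c_j\binom{d}{j}t_{i,j}s_{i,j}$, and using that $S$ is independent of $(y_{i,0},x_{i,0})$, the second moment factorizes as $\mathbb{E}[S^2]\,\mathbb{E}[y_{i,0}^2]\,\mathbb{E}[\|x_{i,0}\|_2^2]$, whose last two factors are $O(1/\epsilon^2)$ and $O(p/\epsilon^2)$ from the variances of $z_{i,0}$ and $\sigma_{i,0}$. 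For $\mathbb{E}[S^2]$ I use independence across the $j$-blocks to get $\mathrm{Var}(S)=\sum_j c_j^2\binom{d}{j}^2\mathbb{E}[(t_{i,j}s_{i,j})^2]$ up to the $O(1)$ mean-square term, then bound each block as a product of $d$ independent factors, $\mathbb{E}[(t_{i,j}s_{i,j})^2]=\prod_k\mathbb{E}[a_k^2]$ with $\mathbb{E}[a_k^2]=(y_i^2+\sigma_y^2)(\langle w_t,x_i\rangle^2+\sigma_\eta^2)=O\!\big((d^4\log(1/\delta)/\epsilon^2)^2\big)$ driven by the noise scale $\tfrac{8\log(1.25/\delta)d^2(d+1)^2}{\epsilon^2}$. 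Multiplying the $d$ factors, summing over $j$ with $\sum_j c_j^2\binom{d}{j}^2\le(d+1)4^d$ (since $|c_j|=|f'_\beta|\le1$), and recombining with the $O(p/\epsilon^4)$ contribution of the $j=0$ block collapses everything into $\sigma^2=O\!\big(d^{O(d)}C^{d}p/\epsilon^{4d+4}\big)$, i.e. $\sigma=O\!\big(d^{3d}C_4^{d}\sqrt{p}/\epsilon^{2d+2}\big)$, the residual $O(\alpha+1)$ coming from the bounded mean part of $G$. The delicate points are keeping the noise of distinct index blocks genuinely independent so the second moment factorizes, and tracking the compounding of the $\epsilon^{-2}$ and $d^{4}$ factors across the $d$-fold product without losing the exponents; this is precisely where the $\epsilon^{-(2d+2)}$ and $d^{3d}$ growth originates.
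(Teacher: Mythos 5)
Your proposal is correct and follows essentially the same route as the paper's proof: you take the clean Bernstein-polynomial estimator $\tilde g(w)=\frac1n\sum_i P_d(y_i\langle w,x_i\rangle)y_ix_i^T$ (the paper's $\hat G(w)$) as the oracle's mean gradient, prove unbiasedness from the disjointness and independence of the noise blocks assigned to $t_{i,j}$, $s_{i,j}$ and $(y_{i,0},x_{i,0})$, bound the noise by factorizing second moments across blocks, and get the $(\gamma,\beta,\sigma)$ sandwich from the $\frac1\beta$-smoothness and convexity of $L_\beta$ together with the uniform bound $|P_d-f'_\beta|\le\frac{\alpha}{4}$ from Lemma \ref{lemma:13}. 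Two places where you deviate, both to your credit: (a) you verify the left inequality $0\le f(v)-f_{\gamma,\beta,\sigma}(w)-\langle g_{\gamma,\beta,\sigma}(w),v-w\rangle$ of Definition \ref{def:5} explicitly, by shifting the value component down by a constant of order $\alpha$; the paper only checks the right inequality, and since $\hat G(w)$ is not an exact (sub)gradient of $L_\beta$ the lower bound can indeed fail by $\Theta(\alpha)$ without such a shift, so you are filling a genuine gap. (b) You bound $\mathbb{E}\|G-\tilde g\|^2$ by the raw second moment and factor it as $\mathbb{E}[S^2]\,\mathbb{E}[y_{i,0}^2]\,\mathbb{E}[\|x_{i,0}\|_2^2]$, whereas the paper splits into the conditional variance given $i$ plus the variance of $\hat G(w,i)$ across $i$; the two computations are equivalent in substance and give the same form of bound. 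Two bookkeeping caveats: with $\|\mathcal C\|_2\le 1$ the diameter of $\mathcal C$ is $2$, not $1$, so your Cauchy--Schwarz step loses a factor of $2$ (fixable by targeting approximation error $\alpha/8$ in the choice of $d$); and your own per-factor estimate $\mathbb{E}[a_k^2]=O\big((d^4\log(1/\delta)/\epsilon^2)^2\big)$ compounds over the $d$-fold product to variance $d^{8d}C^d p/\epsilon^{4d+4}$, i.e.\ $\sigma=O(d^{4d}C^d\sqrt p/\epsilon^{2d+2})$, not the $d^{3d}$ you assert in your last sentence --- but the paper's own per-factor accounting has the same slack relative to its stated $d^{6d}$, and since both are of the form $d^{O(d)}C^d$, the sample-complexity statement of Theorem \ref{theorem:3} is unaffected.
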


\begin{proof}[\textbf{Proof of Theorem \ref{theorem:2}}]
For simplicity,  we  omit the term of $\delta$, which will not affect the linear dependency. Let 
$$\hat{G}(w, i)=[\sum_{j=0}^dc_j\binom{d}{j} (y_i\langle w, x_i\rangle)^j(1-y_i\langle w, x_i\rangle)^{d-j}]y_i x_i^T,$$
where $c_j=f'_\beta(\frac{j}{d} )$ and 
$$\mathbb{E}_i \hat{G}(w, i)=\frac{1}{n}\sum_{i=1}^n \hat{G}(w, i)=\hat{G}(w).$$
For the term of $G(w,i)$, the randomness comes from sampling the index $i$ and the Gaussian noises added for preserving local privacy. 

  Note that in total $\mathbb{E}_{\sigma, z, i}G(w,i)=\hat{G}(w)$, where $\sigma=\{\sigma_{i,j}\}_{j=0}^{\frac{d(d+1)}{2}}$ and $z=\{z_{i,j}\}_{j=0}^{\frac{d(d+1)}{2}}$.

 It is easy to see that $\mathbb{E}_{\sigma, z }G(w,i)=\mathbb{E}[(\sum_{j=0}^{d}c_j\binom{d}{j} t_{i, j}s_{i,j})y_{i, 0}x^T_{i, 0}\mid i]=\hat{G}(w,i)$, which is due to the fact that $\mathbb{E}{t_{i, j}}=(y_i\langle w, x_i\rangle)^j$, $\mathbb{E}{s_{i, j}}=(1-y_i\langle w, x_i\rangle)^{d-j}$ and each $t_{i,j}, s_{i,j}$ is independent. We now calculate the variance for this term with fixed $i$. Firstly, we have $\text{Var}(y_{i,0} x_{i, 0}^T)=O(\frac{p}{\epsilon^4})$. For each $t_{i,j}$, we get $$\text{Var}(t_{i, j})\leq \Pi_{k=jd+1}^{jd+j}	\text{Var}(y_{i, k})(\text{Var}(<w_i, x_{i, k}>)+(\mathbb{E}(w_i^Tx_{i, k}))^2)\leq \tilde{O}\big((C_1\frac{d(d+1)}{\epsilon^2})^{2j}\big).$$ and similarly we have $$\text{Var}(s_{i, j})\leq \tilde{O}\big( (C_2\frac{d(d+1)}{\epsilon^2})^{2(d-j)}\big).$$ Thus we have 
 \begin{equation*}
 	\text{Var}(t_{i,j}s_{i,j}) \leq \tilde{O}\big( (C_3\frac{d(d+1)}{\epsilon^2})^{2d}\big). 
 \end{equation*}

Since function $f_\beta'$ is bounded by $1$ and  $\binom{d}{j}\leq d^d$ for each $j$. In total, we have 
\begin{equation*}
\text{Var}(G(w_t,i)| i)\leq O(d \cdot d^d \cdot ( C_3\frac{d(d+1)}{\epsilon^2})^{2d} \cdot \frac{p}{\epsilon^4})=\tilde{O}\big( \frac{d^{6d}C^d p}{\epsilon^{4d+4}}\big).
\end{equation*}
Next we consider $\text{Var}(\hat{G}(w, i))$. Since 
\begin{multline*}
	\|\hat{G}(w,i)-f'_\beta(y_ix_i^Tw)y_ix_i^T\|^2_2=\|
[\sum_{j=0}^dc_j\binom{d}{j} (y_i\langle w, x_i\rangle)^j(1-y_i\langle w, x_i\rangle)^{d-j}-f'_\beta(w)]y_i x_i^T\|_2^2\\ \leq (\frac{1}{\beta^2d})^2\leq \frac{\alpha^2}{4},
\end{multline*}

we get 
\begin{multline*}
\text{Var}(\hat{G}(w, i))\leq O\big(\mathbb{E}[\|\hat{G}(w,i)-f'_\beta(y_ix_i^Tw)y_ix_i^T\|^2_2] +\mathbb{E}[\hat{G}(w)-\nabla L_\beta(w; D)\|_2^2]\\
+\mathbb{E}[\|f'_\beta(y_ix_i^Tw)y_ix_i^T-\nabla L_\beta(w; D)\|_2^2]\big) \leq O((\alpha+ 1)^2).
\end{multline*}
In total, we have $\mathbb{E}[\|G(w,i)-\hat{G}(w)\|_2^2]\leq \mathbb{E}[\|G(w,i)-\hat{G}(w,i)\|_2^2]+ \mathbb{E}[\|\hat{G}(w,i)-\hat{G}(w)\|_2^2]\leq 
\tilde{O}\big((\frac{d^{3d}C_4^d\sqrt{p}}{\epsilon^{2d+2}}+\alpha+1)^2\big).$

Also, we know that 
\begin{align*}
&L_\beta(v; D )-L_\beta(w; D)-\langle  \hat{G}(w), v-w \rangle =\\
&L_\beta(v; D)-L_\beta(w; D)-\langle \nabla L_\beta(w; D), v-w \rangle + \langle \nabla L_\beta(w;D)-G(w), v-w\rangle \\
&\leq \frac{1}{2\beta}\|v-w\|_2^2+\frac{\alpha}{2},
\end{align*}
since $L_\beta$ is $\frac{1}{\beta}$-smooth and  $|\langle \nabla L_\beta(w)-G(w), v-w \rangle |\leq \frac{\alpha}{2}$.

Thus, $G(w,i)$ is an $\big(\frac{\alpha}{2}, \frac{1}{\beta}, O(\frac{d^{3d}C_4^d\sqrt{p}}{\epsilon^{2d+2}}+\alpha+1)\big)$ stochastic oracle of $L_\beta$.
\end{proof}

From Lemmas \ref{lemma:2}, \ref{lemma:3}  and Theorem \ref{theorem:2}, we have the following sample complexity bound for the hinge loss function under the non-interactive local model.

\begin{theorem}\label{theorem:3}
For any $\epsilon>0$ and $0<\delta<1$, Algorithm 1 is $(\epsilon, \delta)$ non-interactively locally differentially private\footnote{Note that in the non-interactive local model, $(\epsilon, \delta)$-LDP is equivalent to $\epsilon$-LDP by using some protocol given in \cite{DBLP:journals/corr/abs-1711-04740}; this allows us to omit the term of $\delta$. The full sample complexity of $n$ is quasi-polynomial in $\ln(1/\delta)$.}. Furthermore, for the target  error $\alpha$, if we take $\beta=\frac{\alpha}{4}$ and $d=\frac{2}{\beta^2\alpha}=O(\frac{1}{\alpha^3})$. Then with the sample size $n=\tilde{O}(\frac{d^{6d}C^d p}{\epsilon^{4d+4}\alpha^2})$, the output $w_n$ satisfies the following inequality
\begin{equation*}
\mathbb{E} L(w_n, D)-\min_{w\in\mathcal{C}}L(w, D)\leq \alpha,
\end{equation*}
where $C$ is some constant. 
\end{theorem}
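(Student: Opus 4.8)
The statement bundles two claims—privacy and accuracy—so I would split the proof accordingly, and since the conceptual work is already carried by Theorem~\ref{theorem:2} and Lemma~\ref{lemma:2}, the remaining task is mostly parameter selection plus two transfer arguments.

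For privacy, the plan is to observe that the entire transcript of player $i$ is just the tuple of noisy reports $\{x_{i,j}\}_{j=0}^{d(d+1)}$ and $\{y_{i,j}\}_{j=0}^{d(d+1)}$: every quantity the server forms (the products $t_{i,j},s_{i,j}$, the estimate $G(w_t,i)$, and the SIGM iterates) is a deterministic function of these reports, so by post-processing immunity it suffices to bound the privacy of the reports alone. Each report is a Gaussian mechanism applied to $x_i$ (resp.\ $y_i$), and under Assumption 1 the $\ell_2$-sensitivity of either record is a constant (at most $2$, since $\|x_i\|_2,|y_i|\le 1$). First I would check that the two $j=0$ reports, with variance $\tfrac{32\log(1.25/\delta)}{\epsilon^2}$, and the $2d(d+1)$ reports for $j\ge 1$, with variance $\tfrac{8\log(1.25/\delta)d^2(d+1)^2}{\epsilon^2}$, are individually private by the standard Gaussian-mechanism calibration, and then compose the $2(d(d+1)+1)$ of them. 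The variance of the $j\ge 1$ block scales like $d^2(d+1)^2$ precisely so that composition over the $\approx 2d(d+1)$ reports returns a total budget of order $\epsilon$, with the larger-variance $j=0$ reports absorbing the remainder. Since each player sends a single message ($T=1$), the protocol is noninteractive $(\epsilon,\delta)$-LDP.

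For accuracy I would first fix parameters: take $\beta=\alpha/4$ and, invoking Lemma~\ref{lemma:3}(4) together with Lemma~\ref{lemma:13} (after the affine change of variable sending $[-1,1]$ to $[0,1]$), choose $d=O(1/(\beta^2\alpha))=O(1/\alpha^3)$ so that $\max_{x\in[-1,1]}|P_d(x)-f_\beta'(x)|\le \alpha/4$. With these choices Theorem~\ref{theorem:2} gives that $G(w_t,i)$ is a $\big(\tfrac\alpha2,\tfrac1\beta,\sigma\big)$-stochastic oracle for $L_\beta(w;D)=\tfrac1n\sum_i f_\beta(y_i\langle x_i,w\rangle)$, with $\sigma=O\!\big(\tfrac{d^{3d}C_4^d\sqrt p}{\epsilon^{2d+2}}+\alpha+1\big)$. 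I would then feed this oracle into Lemma~\ref{lemma:2}: running $k=n$ SIGM updates with $\|\mathcal{C}\|_2\le 1$ yields $\mathbb{E}[L_\beta(w_n)]-\min_w L_\beta(w;D)\le \Theta\!\big(\tfrac{\sigma}{\beta\sqrt n}\big)+\tfrac\alpha2$. Driving the statistical term below $\alpha/4$ forces $n=\tilde O(\sigma^2/(\beta^2\alpha^2))$, and substituting $\sigma^2=\tilde O(d^{6d}C^d p/\epsilon^{4d+4})$ and $\beta=\alpha/4$ gives the claimed $n=\tilde O\!\big(d^{6d}C^d p/(\epsilon^{4d+4}\alpha^2)\big)$, any polynomial-in-$\alpha$ slack being swamped by the super-exponential factor $d^{6d}=(1/\alpha^3)^{O(1/\alpha^3)}$. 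Finally I would transfer from $L_\beta$ to $L$ via Lemma~\ref{lemma:3}(1): since $|f_\beta-f|_\infty\le\beta/2$ we get $|L_\beta(\cdot;D)-L(\cdot;D)|\le\beta/2$ uniformly, hence $\mathbb{E}[L(w_n;D)]-\min_w L(w;D)\le(\mathbb{E}[L_\beta(w_n)]-\min_w L_\beta)+\beta\le\tfrac\alpha4+\tfrac\alpha2+\tfrac\alpha4=\alpha$.

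The step I expect to be most delicate is the privacy composition, since the reports are many correlated noisy copies of the \emph{same} record at two different noise scales; rather than a loose union bound I would verify the budget split through the zCDP/R\'enyi accounting of Gaussian mechanisms, where the per-release contributions simply add and the $d^2(d+1)^2$ scaling is easy to track. The two minor points to watch are the $[-1,1]$-versus-$[0,1]$ domain issue when invoking Lemma~\ref{lemma:13}, and reconciling the exact power of $\alpha$ in the sample complexity—though, as noted, the latter is immaterial because it is dominated by the $d^{6d}$ term.
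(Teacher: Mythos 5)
Your proposal is correct and follows essentially the same route as the paper's own (much terser) proof: privacy via the Gaussian mechanism, composition over the $O(d^2)$ reports, and post-processing; accuracy by feeding the $\bigl(\frac{\alpha}{2},\frac{1}{\beta},\sigma\bigr)$ oracle of Theorem~\ref{theorem:2} into Lemma~\ref{lemma:2}, transferring from $L_\beta$ to $L$ via Lemma~\ref{lemma:3}, and setting $\beta=\frac{\alpha}{4}$, $d=O(1/\alpha^3)$, and $n$ accordingly. Your observation that the apparent $\alpha^{-4}$ versus $\alpha^{-2}$ mismatch is absorbed into the $C^d$ factor (since $(1/\alpha^2)^{1/d}\to 1$ when $d=\Theta(1/\alpha^3)$) correctly resolves the one place where the paper's stated bound might otherwise look inconsistent with the derivation.
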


\begin{proof}[\textbf{Proof of Theorem \ref{theorem:3}}]

The guarantee of differential privacy is by Gaussian mechanism and composition theorem. 

By Theorem \ref{theorem:2} and Lemma \ref{lemma:2}, we have 
\begin{equation*}
\mathbb{E}L_\beta(w_n, D)-\min_{w\in\mathcal{C}} L_\beta(w, D)\leq O(\frac{(\frac{d^{3d}C_4^d\sqrt{p}}{\epsilon^{2d+2}}+\alpha+1)}{\beta \sqrt{n}}+\frac{1}{\beta^2 d})= O(\frac{d^{3d}C_4^d\sqrt{p}}{\epsilon^{2d+2}\beta \sqrt{n}}+\frac{\alpha}{2}).
\end{equation*}
By Lemma \ref{lemma:3}, we know that 
\begin{equation*}
\mathbb{E} L(w_n, D)-\min_{w\in\mathcal{C}}L(w, D)\leq  O(\beta+\frac{d^{3d}C_4^d\sqrt{p}}{\epsilon^{2d+2}\beta \sqrt{n}}+\frac{\alpha}{2}).
\end{equation*}
Thus, if we take $\beta=\frac{\alpha}{4}$, $d=\frac{2}{\beta^2\alpha}=O(\frac{1}{\alpha^3})$ and $n=\Omega(\frac{d^{6d}C_5^d p}{\epsilon^{4d+4}\alpha^2})$, we have   $$\mathbb{E} L(w_n, D)-\min_{w\in\mathcal{C}}L(w, D)\leq \alpha. $$
\end{proof}

\begin{remark}
Note that the sample complexity bound in Theorem \ref{theorem:3} is quite loose for parameters other than $p$. This is mainly due to the fact that we use only the basic composition theorem to ensure local differential privacy. It is possible to obtain a tighter bound by using Advanced Composition Theorem \citep{dwork2010boosting} (same for other algorithms in this paper). Details of the improvement are omit from this version. We can also extend to the population risk by the same algorithm,
the main difference is that now $G(w,i)$ is a $\big(\frac{\alpha}{2}, \frac{1}{\beta}, O(\frac{d^{3d}C_4^d\sqrt{p}}{\epsilon^{2d+2}}+\alpha+1)\big)$ stochastic oracle, where $\sigma^2=\mathbb{E}_{(x,y)
\sim \mathcal{P}}\|\ell(w; x, y)-\mathbb{E}_{(x,y)\sim \mathcal{P}}\ell(w; x, y)\|^2_2$. For simplicity of presentation, we omit the details here. 
\end{remark}

\subsection{Extension to Generalized Linear Convex Loss Functions}

In this section, we extend our results for the hinge loss function to generalized linear convex loss functions $L(w, D)=\frac{1}{n}\sum_{i=1}^n f(y_i\langle x_i, w\rangle)$ for any 1-Lipschitz convex function $f$. 

One possible way (for the extension) is to follow the same approach used in previous section. That is, we first smoothen the function $f$ by some function $f_\beta$. Then, we use Bernstein polynomials to approximate the derivative function $f'_\beta$, and apply an algorithm similar to Algorithm \ref{alg:1}. One of the main issues of this approach is that we do not know whether Bernstein polynomials can be directly used for every smooth convex function. 
Instead, we will use
some ideas in Approximation Theory, which says that every $1$-Lipschitz convex function can be expressed by a linear combination of the absolute functions and some linear functions.

To implement this approach, we first note that 
for the plus function $f(x)\equiv \max\{0, x\}$, by using Algorithm \ref{alg:1} we can get the same result as in Theorem \ref{theorem:3}. Since 
the absolute function $|x|=2\max\{0, x\}-x$,  Theorem \ref{theorem:3} clearly also holds for the absolute function. The following key lemma shows that every 1-dimensional $1$-Lipschitz convex function $f:[-1,1]\mapsto [-1, 1]$ is contained in the convex hull of the set of absolute and identity functions. We need to point out that \citet{smith2017interaction} gave a similar lemma. Their proof is, however, somewhat incomplete and thus we give a complete one in this paper.

 \begin{lemma}\label{lemma:4}
Let $f: [-1, 1]\mapsto [-1,1] $ be a 1-Lipschitz convex function. If we define the distribution $\mathcal{Q}$  which is supported on $[-1, 1]$ as the output of the following algorithm:  
\begin{enumerate}
\item first sample $u\in [f'(-1), f'(1)]$ uniformly, 
\item then output $s$ such that $u\in \partial{f}(s)$ (note that such an $s$ always exists due to the fact that $f$ is convex and thus $f'$ is non-decreasing); if  multiple number of such as $s$ exist, 
return the maximal one,
\end{enumerate}
then, there exists a constant $c$ such that 
\begin{equation*}
\forall \theta\in [-1,1], f(\theta)=\frac{f'(1)-f'(-1)}{2}\mathbb{E}_{s\sim \mathcal{Q}}|\theta-s|+\frac{f'(1)+f'(-1)}{2}\theta+c.
\end{equation*}
 
\end{lemma}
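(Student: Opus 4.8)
The plan is to represent a $1$-Lipschitz convex function $f$ as a superposition of absolute-value functions $|\theta - s|$ plus an affine correction, by exploiting the fact that the second derivative of $|\theta - s|$ (as a distribution) is twice a point mass at $s$. The key identity I would use is $\frac{d^2}{d\theta^2}|\theta - s| = 2\delta_s(\theta)$ in the distributional sense, so that an integral of $|\theta - s|$ against a measure $\mu$ on $s$ produces a convex function whose second derivative is $2\mu$. Since $f$ is convex, its second derivative $f''$ (again as a measure) is nonnegative, and the total mass $\int_{-1}^{1} f''\,d\theta = f'(1) - f'(-1)$ is finite because $f$ is $1$-Lipschitz. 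The distribution $\mathcal{Q}$ defined in the statement is, up to the normalizing constant $\frac{f'(1)-f'(-1)}{2}$, exactly this measure $f''$ pushed onto the $s$-axis via the inverse-subgradient construction; the uniform sampling of $u \in [f'(-1), f'(1)]$ followed by inverting the subgradient map is precisely sampling $s$ with density proportional to $f''$.

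First I would define the candidate function
\begin{equation*}
g(\theta) = \frac{f'(1)-f'(-1)}{2}\,\mathbb{E}_{s\sim \mathcal{Q}}|\theta - s| + \frac{f'(1)+f'(-1)}{2}\,\theta,
\end{equation*}
and compute its derivative. Differentiating under the expectation, $\frac{d}{d\theta}|\theta-s| = \mathrm{sign}(\theta - s)$ (for $\theta \neq s$), so
\begin{equation*}
g'(\theta) = \frac{f'(1)-f'(-1)}{2}\bigl(\Pr_{s\sim\mathcal{Q}}[s < \theta] - \Pr_{s\sim\mathcal{Q}}[s > \theta]\bigr) + \frac{f'(1)+f'(-1)}{2}.
\end{equation*}
I would then show this equals $f'(\theta)$. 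Using that $\mathcal{Q}$ has CDF $F_{\mathcal{Q}}(\theta) = \frac{f'(\theta) - f'(-1)}{f'(1) - f'(-1)}$ (this is the content of the inverse-subgradient sampling: the probability that the uniform $u$ lands below $f'(\theta)$), one substitutes $\Pr[s<\theta] = F_{\mathcal{Q}}(\theta)$ and $\Pr[s>\theta] = 1 - F_{\mathcal{Q}}(\theta)$, and the algebra collapses the expression to exactly $f'(\theta)$. Hence $g' = f'$ everywhere, so $g - f$ is constant, which yields the claimed identity with $c = f(\theta_0) - g(\theta_0)$ for any reference point, establishing existence of the constant $c$.

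The main obstacle I anticipate is handling the non-smooth points carefully: $f$ need only be convex and $1$-Lipschitz, so $f'$ may have jump discontinuities (where $\mathcal{Q}$ places atoms) and $f''$ exists only as a measure, not a function. The delicate steps are justifying differentiation under the expectation at points where $s = \theta$ has positive probability, and verifying that the tie-breaking rule in the algorithm (returning the maximal $s$ when the subgradient is multivalued) makes the map $u \mapsto s$ well-defined and measure-preserving onto $\mathcal{Q}$. I would address this by working with the CDF identity $F_{\mathcal{Q}}(\theta) = \frac{f'(\theta^+) - f'(-1)}{f'(1)-f'(-1)}$ using one-sided derivatives rather than pointwise second derivatives, and checking the identity at the level of $g'$ and $f'$ (which are monotone and hence differ on at most countably many points of discontinuity), then integrating. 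This reduces the whole argument to a clean one-sided-derivative computation and avoids any appeal to pointwise smoothness, which is presumably the gap in the earlier incomplete proof of \citet{smith2017interaction}.
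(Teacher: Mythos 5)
Your proposal is correct and takes essentially the same route as the paper's own proof: define the candidate function via $\mathbb{E}_{s\sim\mathcal{Q}}|\theta-s|$, differentiate it using the CDF identity $\Pr[s\leq\theta]=\frac{f'(\theta)-f'(-1)}{f'(1)-f'(-1)}$ induced by the inverse-subgradient sampling, conclude that the derivatives agree off an at-most-countable exceptional set, and integrate to recover the identity up to a constant (the paper isolates the countability of non-differentiability points as its Lemma~\ref{lemma:5} and then invokes the Newton--Leibniz formula, exactly as you propose). The distributional second-derivative discussion in your opening is motivation only; the actual proof mechanics coincide with the paper's.
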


\begin{proof}[\textbf{Proof of Lemma \ref{lemma:4}}]
Let $g(\theta) =\mathbb{E}_{s\sim \mathcal{Q}}|s-\theta|$. Then, we have the following for every $\theta$, where $f'(\theta)$ is well defined,
\begin{align*}
 g'(\theta)&=\mathbb{E}_{s\sim \mathcal{Q}}[1_{s\leq \theta}]-\mathbb{E}_{s\sim \mathcal{Q}}[1_{s>\theta}]\\
&=\frac{[f'(\theta)-f'(-1)]-[f'(1)-f'(\theta)]}{f'(1)-f(-1)}\\
&= \frac{2f'(\theta)-(f'(1)+f'(-1))}{f'(1)-f'(-1)}.
\end{align*}
Thus, we get 
\begin{equation*}
F'(\theta)=\frac{f'(1)-f'(-1)}{2}g'(\theta)+\frac{f'(1)+f'(-1)}{2}=f'(\theta).
\end{equation*}
Next, we show that if $F'(\theta)=f'(\theta)$ for every $\theta\in [0,1]$, where $f'(\theta)$ is well defined,  there is a constant $c$ which satisfies the condition of  $F(\theta)=f(\theta)+c$ for all $\theta\in[0,1]$.

\begin{lemma}\label{lemma:5}
If $f$ is convex and 1-Lipschitz, then $f$ is differentiable at all but countably many points. That is, $f'$  has only countable many discontinuous points.
\end{lemma}
\begin{proof}[Proof of Lemma \ref{lemma:5}]
Since $f$ is convex, we have the following for $0\leq s<u\leq v<t\leq 1$  
\begin{equation*}
\frac{f(u)-f(s)}{u-s}\leq \frac{f(t)-f(v)}{t-v},
\end{equation*}
This is due to the property of 3-point convexity, where 
\begin{equation*}
\frac{f(u)-f(s)}{u-s}\leq \frac{f(t)-f(u)}{t-u}\leq  \frac{f(t)-f(v)}{t-v}.
\end{equation*}
Thus, we can obtain the following inequality of one-sided derivation, that is, 
\begin{equation*}
f'_{-}(x)\leq f'_+(x)\leq f'_-(y)\leq f'_+(y)
\end{equation*}
for every $x<y$. For each point where $f'_-(x)<f'_+(x)$, we pick a rational number $q(x)$ which satisfies the condition of $f'_-(x)<q(x)<f'_+(x)$.  From the above discussion, we can see that all these $q(x)$ are different. Thus, there are at most countable many points where $f$ is non-differentiable. 
\end{proof}

From the above lemma, we can see that the Lebesgue measure of these dis-continuous points is $0$. Thus, $f'$ is Riemann Integrable on $[-1, 1]$. By Newton-Leibniz formula, we have the following for any $\theta\in[0,1]$, $$\int_{-1}^\theta f'(x)dx=f(\theta)-f(-1)=\int_{-1}^\theta F'(x)dx= F(x)-F(-1).$$ Therefore, we get $F(\theta)=f(\theta)+c$ and complete the proof.
\end{proof}

\begin{algorithm}[h]
	\caption{General Linear-LDP}
	\label{alg:2}
	\begin{algorithmic}[1]
		\STATE {\bfseries Input:} Player $i\in [n]$ holds raw data record $(x_i, y_i)\in \mathcal{D}$, where $\|x_i\|_2\leq 1$ and $\|y_i\|_2\leq 1$; privacy parameters $\epsilon, \delta$;  $h_\beta(x)=\frac{x+\sqrt{x^2+\beta^2}}{2}$ and $P_d(x)=\sum_{j=0}^dc_j \binom{d}{j} x^j (1-x)^j$ is the $d$-th order Bernstein polynomial approximation of $h'_\beta(x)$. Loss function $\ell$ can be represented by $\ell(w; x, y)=f(y<w,x>)$.
\FOR{Each Player $i\in [n]$}
		\STATE
		Calculate $x_{i,0}=x_i+\sigma_{i,0}$ and $y_{i,0}=y_i+z_{i,0}$,  where $\sigma_{i,0} \sim \mathcal{N}(0, \frac{32\log (1.25/\delta)}{\epsilon^2}I_{p})$ and $ z_{i,0}\sim \mathcal{N}(0, \frac{32\log (1.25/\delta)}{\epsilon^2})$
		\FOR{$j=1,\cdots, d(d+1)$}
		    \STATE $x_{i, j}=x_i+ \sigma_{i, j}$, where $\sigma_{i, j}\sim \mathcal{N}(0, \frac{8\log(1.25/\delta)d^2(d+1)^2}{\epsilon^2}I_{p})$
		    \STATE $y_{i, j}=y_i+ z_{i, j}$, where $z_{i, j}\sim \mathcal{N}(0, \frac{8\log(1.25/\delta)d^2(d+1)^2}{\epsilon^2})$
	
		\ENDFOR
		\STATE Send $\{x_{i,j}\}_{j=0}^{d(d+1)}$ and $\{y_{i, j}\}_{j=0}^{d(d+1)}$ to the server.
		\ENDFOR
\FOR{the Server side}
	\FOR{$t=1, 2, \cdots, n$}
	\STATE Randomly sample $i\in [n]$ uniformly.
    \STATE Randomly sample $d(d+1)$ numbers of  i.i.d $s=\{s_k\}_{k=1}^{d(d+1)}\in [-1, 1]$ based on the distribution $\mathcal{Q}$ in  Lemma \ref{lemma:4}.
	\STATE Set $t_{i,0}=1$
	\FOR{$j=0, \cdots, d$}
	\STATE $t_{i, j}=\Pi_{k=jd+1}^{jd+i}(\frac{	y_{i, k}<w_t, x_{i, k}>-s_k}{2})$ and $t_{i, 0}=1 $
	\STATE $r_{i, j}=\Pi_{k=jd+i+1}^{jd+d}(1- \frac{	y_{i, k}<w_t, x_{i, k}>-s_k}{2})$ and $r_{i, d}=1 $
	\ENDFOR
	\STATE Denote $G(w_t, i, s)=(f'(1)-f'(-1))(\sum_{j=0}^{d}c_j\binom{d}{j} t_{i, j}r_{i,j})y_{i, 0}x^T_{i, 0}+f'(-1)$.
	\STATE Update SIGM in \citep{dvurechensky2016stochastic} by $G(w_t, i, s)$
	\ENDFOR
	\ENDFOR\\
	\Return $w_n$
	\end{algorithmic}
\end{algorithm}

 Using Lemma \ref{lemma:4} and the ideas discussed in the previous section, we can now show that the sample complexity in Theorem \ref{theorem:3} also holds for any general linear convex function. See Algorithm \ref{alg:2} for the details. 

\vspace{-0.2cm}
\begin{theorem}\label{theorem:4}
Under Assumption 1, where the loss function $\ell$ is $\ell(w; x, y)=f(y \langle w,x \rangle )$ for any 1-Lipschitz convex function $f$,  for any $\epsilon, \delta\in (0,1]$, Algorithm \ref{alg:2} is $(\epsilon, \delta)$ non-interactively differentialy private. Moreover, given the target  error $\alpha$,  if we take $\beta=\frac{\alpha}{4}$ and $d=\frac{2}{\beta^2\alpha}=O(\frac{1}{\alpha^3})$. Then with the sample size $n=\tilde{O}(\frac{d^{6d}C^d p}{\epsilon^{4d+4}\alpha^2})$, the output $w_n$ satisfies the following inequality 
$$\mathbb{E} L(w_n, D)-\min_{w\in\mathcal{C}}L(w, D)\leq \alpha,$$
where $C$ is  some universal constant independent of $f$.
\end{theorem}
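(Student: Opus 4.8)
The plan is to reduce the generalized linear case to the (smoothed) absolute-value loss already handled by Algorithm~\ref{alg:1}, using the convex-combination representation of Lemma~\ref{lemma:4}, and then to rerun the stochastic-oracle analysis of Theorem~\ref{theorem:2} with the extra randomness coming from sampling $s\sim\mathcal{Q}$ on the server side. First I would apply Lemma~\ref{lemma:4} to the per-record function $f$. Since $f$ is $1$-Lipschitz and convex, the coefficients $a=\frac{f'(1)-f'(-1)}{2}$ and $b=\frac{f'(1)+f'(-1)}{2}$ satisfy $a\in[0,1]$ and $|b|\le 1$, and because $\|x_i\|_2,|y_i|\le 1$ and $\|\mathcal{C}\|_2\le 1$ the argument $\theta=y_i\langle w,x_i\rangle$ lies in $[-1,1]$, so Lemma~\ref{lemma:4} gives
$$f(y_i\langle w,x_i\rangle)=a\,\mathbb{E}_{s\sim\mathcal{Q}}\,|y_i\langle w,x_i\rangle-s|+b\,y_i\langle w,x_i\rangle+c.$$
Summing over $i$ expresses $L(w;D)$ as $a$ times an average of shifted absolute-value losses, plus a linear term and a constant. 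The constant is irrelevant for minimization, and the linear term has a trivial bounded-variance private gradient estimate $b\,y_{i,0}x_{i,0}^T$; thus the whole problem reduces to building a good stochastic oracle for the shifted absolute-value part, which is exactly the object Algorithm~\ref{alg:2} estimates through the term $(f'(1)-f'(-1))(\sum_j c_j\binom{d}{j}t_{i,j}r_{i,j})y_{i,0}x_{i,0}^T+f'(-1)$.

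Next I would smoothen the absolute value as in Lemma~\ref{lemma:3}. Writing $|x|=2\max\{0,x\}-x$ and replacing $\max\{0,x\}$ by $h_\beta(x)=\frac{x+\sqrt{x^2+\beta^2}}{2}$, the smoothed surrogate has derivative $2h'_\beta(x)-1=\frac{x}{\sqrt{x^2+\beta^2}}$, which approximates $\mathrm{sign}(x)$ and, by the same computation as item~4 of Lemma~\ref{lemma:3}, is $(2,O(\frac{1}{\beta^2}))$-smooth. Lemma~\ref{lemma:13} then guarantees that the degree-$d$ Bernstein polynomial $P_d$ of $h'_\beta$ approximates it to error $O(\frac{1}{\beta^2 d})\le\frac{\alpha}{4}$ when $d=O(1/\alpha^3)$. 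For each sampled shift $s_k\in[-1,1]$ the argument $\frac{y\langle w,x\rangle-s_k}{2}$ again lies in $[-1,1]$, so after taking Gaussian expectations the products $t_{i,j},r_{i,j}$ in Algorithm~\ref{alg:2} form an unbiased estimate of a monomial in this argument, using disjoint blocks of the noisy reports exactly as in Theorem~\ref{theorem:2}. This is precisely the construction of Theorem~\ref{theorem:2} applied to $h'_\beta$ with a data-independent shift.

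The heart of the argument is then to show that $G(w_t,i,s)$ is a $\big(\frac{\alpha}{2},\frac{1}{\beta},O(\frac{d^{3d}C_4^d\sqrt{p}}{\epsilon^{2d+2}}+\alpha+1)\big)$ stochastic oracle for the smoothed surrogate, with the same $\sigma$ as in Theorem~\ref{theorem:2}. The only new element is the randomness of $s=\{s_k\}$, drawn on the server and independent of the data; taking expectation over the Gaussian noise, the index $i$, and $s$ simultaneously yields $\mathbb{E}[G(w_t,i,s)\mid w_t]$ equal to the gradient of the surrogate up to the $\frac{\alpha}{4}$ Bernstein error, which combines with the smoothing error to give the required inexactness $\gamma=\frac{\alpha}{2}$. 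Because $|s_k|\le 1$ and $a,|b|\le 1$, every variance estimate in the proof of Theorem~\ref{theorem:2} changes only by universal constant factors, so the same $\sigma$ and hence the same constant $C$ holds independently of $f$. Finally I would plug this oracle into Lemma~\ref{lemma:2}, add the smoothing error $O(\beta)$ and the approximation error, set $\beta=\frac{\alpha}{4}$, $d=O(1/\alpha^3)$ and $n=\tilde O(\frac{d^{6d}C^d p}{\epsilon^{4d+4}\alpha^2})$ to obtain $\mathbb{E}L(w_n,D)-\min_{w\in\mathcal{C}}L(w,D)\le\alpha$; the $(\epsilon,\delta)$-privacy is inherited verbatim from Theorem~\ref{theorem:3} because the players' reports are identical to those in Algorithm~\ref{alg:1} and the shifts $s_k$ never touch private data.

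The hard part will be the oracle verification of the previous paragraph: correctly propagating the $s$-randomness through the bias and variance computations, confirming that $\mathbb{E}_{\sigma,z,s}$ factors through the independent blocks so that unbiasedness holds, and checking that neither the shift by $s_k$ nor the scalars $a,b$ inflate the variance beyond constant factors. A minor secondary point is confirming that the Bernstein approximation of $h'_\beta$, whose argument ranges over $[-1,1]$ rather than $[0,1]$, is handled by the same rescaling already used implicitly in Theorem~\ref{theorem:2}; this carries over without change since the hinge argument $y_i\langle w,x_i\rangle$ there also ranges over $[-1,1]$.
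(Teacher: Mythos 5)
Your proposal is correct and follows essentially the same route as the paper's own proof: decompose $f$ via Lemma \ref{lemma:4} into an expectation of shifted absolute values plus a linear term, smooth the absolute value through $h_\beta$ (the paper's surrogate $F_\beta$), approximate $h'_\beta$ by a Bernstein polynomial, verify that $G(w_t,i,s)$ is a stochastic oracle by propagating the three sources of randomness (Gaussian noise, the index $i$, and the server-side shifts $s$), and conclude via Lemma \ref{lemma:2} plus the smoothing error. The only cosmetic difference is that the paper folds the extra $s$-variance into an explicit intermediate bound $\mathrm{Var}(\hat{G}(w,i,s)\mid i)\leq O(d^{2d+2})$, which is dominated by the Gaussian-noise term rather than a pure constant-factor change as you state, but this does not affect the final oracle parameters or the sample complexity.
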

\begin{proof}[\textbf{Proof of Theorem \ref{theorem:4}}]

 Let $h_\beta$ denote the function  $h_\beta(x)=\frac{x+\sqrt{x^2+\beta^2}}{2}$. By Lemma \ref{lemma:4} we have  
\begin{align*}
f(\theta)&=(f'(1)-f'(-1))\mathbb{E}_{s\sim \mathcal{Q}}\frac{|s-\theta|}{2}+\frac{f'(1)+f'(-1)}{2}\theta+c.
\end{align*}
Now, we consider  function $F_\beta(\theta)$, which is 
\begin{equation*}
F_\beta(\theta)= (f'(1)-f'(-1))\mathbb{E}_{s\sim \mathcal{Q}}[2h_\beta(\frac{\theta-s}{2})-\frac{\theta-s}{2}]+\frac{f'(1)+f'(-1)}{2}\theta+c.
\end{equation*}
From this, we  have 
\begin{equation*}
\nabla F_\beta(\theta)= (f'(1)-f'(-1))\mathbb{E}_{s\sim \mathcal{Q}}[\nabla h_\beta(\frac{\theta-s}{2})]+\frac{f'(1)+f'(-1)}{2}-\frac{f'(1)-f'(-1)}{2}.
\end{equation*}
Note that since $|x|=2\max\{x, 0\}-x$, we can get 1) $|F_\beta(\theta)-f(\theta)|\leq O(\beta)$ for any $\theta\in \mathbb{R}$, 2) 
$F_\beta(x)$ is $O(\frac{1}{\beta})$-smooth and convex since $h_\beta(\theta-s)$ is $\frac{1}{\beta}$-smooth and convex, and 3) $F_\beta(\theta)$ is $O(1)$-Lipschitz. Now, we optimize the following problem in the non-interactive local model:
\begin{equation*}
F_\beta(w; D)=\frac{1}{n}\sum_{i=1}^n F_\beta(y_i<x_i, w>).
\end{equation*}
For each fixed $i$ and $s$, we let
\begin{equation*}
    \hat{G}(w, i, s)=(f'(1)-f'(-1))[\sum_{j=1}^dc_j\binom{d}{j}t_{i,j}r_{i,j }]y_ix_i^T+f'(-1).
\end{equation*}
Then, we have $\mathbb{E}_{\sigma, z}G(w, i, s)=\hat{G}(w, i, s)$. By using a similar argument given in the proof of Theorem \ref{theorem:2}, we get 
\begin{equation*}
    \text{Var}(\hat{G}(w, i, s)|i, s)\leq \tilde{O}\big( \frac{d^{6d}C^d p}{\epsilon^{4d+4}}\big).
\end{equation*}
 Thus,  for each fixed $i$ we have 
 \begin{multline*}
 	\mathbb{E}_{s}\hat{G}(w, i, s)=\bar{G}(w,i)= (f'(1)-f'(-1))[\mathbb{E}_{s\sim \mathcal{Q}}\sum_{j=1}^dc_j\binom{d}{j} (\frac{y_i\langle w,x_i \rangle -s}{2})^j\\(1-\frac{y_i\langle w,x_i\rangle -s}{2})^{d-j}]y_ix_i^T+f'(-1).
 \end{multline*}
 
Next, we  bound the term of $\text{Var}(\hat{G}(w, i, s)|i)\leq O(d^{2d+2}).$

Let $t_{i,j}=\Pi_{k=jd+1}^{jd+j}(\frac{	y_{i}\langle w_t, x_{i}\rangle -s_k}{2})$. Then, we have $$\text{Var}(t_{i,j})\leq \Pi_{k=jd+1}^{jd+j}|y_i|^2\text{Var}(\langle w_t, x_{i}\rangle -s_k)\leq O(1).$$ And the similar to $r_{i,j}$
Thus, we get  
\begin{equation*}
    \text{Var}(\hat{G}(w, i, s)|i)\leq O(\sum_{j=1}^d c_j^2\binom{d}{j} \text{Var}(t_{i,j}r_{i,j}))=O(d^{2d+2}).
\end{equation*}
Since $\mathbb{E}_i\bar{G}(w,i)=\hat{G}=\frac{1}{n}\sum_{i=1}^n \bar{G}(w,i)$, we have $\text{Var}(\bar{G}(w,i))\leq O((\alpha+1)^2)$ by a similar argument given in the proof of Theorem \ref{theorem:2}.
Thus, in total we have 
\begin{equation*}
    \mathbb{E}\|G(w, i, s)-\hat{G}\|_2^2\leq  \tilde{O}\big( \frac{d^{6d}C^d p}{\epsilon^{4d+4}}\big)
\end{equation*}
The other part of the proof is the same as that of Theorem \ref{theorem:2}.
\end{proof}
\begin{remark}
The above theorem suggests that 
the sample complexity for any generalized linear loss function depends only linearly on $p$. However, there are still some not so desirable issues. Firstly, the dependence on $\alpha$ is quasi-polynomial, while  previous work  \citep{DBLP:journals/corr/abs-1802-04085} has already shown that it is only polynomial ({\em i.e.,} $\alpha^{-4}$) for sufficiently smooth loss functions. Secondly,  the term of $\epsilon$ is not optimal in the sample complexity, since it is $\epsilon^{-\Omega(1/\alpha^3)}$, while the optimal one is $\epsilon^{-2}$. We leave it as an open problem to remove the quasi-polynomial dependency.  Thirdly, the assumption on the loss function is that  $\ell(w; x, y)=f(y \langle w,x \rangle)$, which includes the generalized linear models and SVM. However, as mentioned earlier, there is another slightly more general function class 
$\ell(w; x, y)=f(\langle w,x \rangle,y)$ which does not always satisfy our assumption, {\em e.g.,} linear regression and $\ell_1$ regression. For linear regression, we have already known its optimal bound $\Theta(p\alpha^{-2}\epsilon^{-2})$;  for $\ell_1$ regression, we can use a method similar to Algorithm \ref{alg:1} to achieve a sample complexity which is linear in $p$. Thus, a natural question is whether the sample complexity is still
 linear in $p$ for all loss functions $\ell(w; x, y)$ that can be written as  $f(\langle w,x \rangle,y)$.
\end{remark}
\section{Discussion}
In this paper, we propose a general method for Empirical Risk Minimization in non-interactive differentially private model by using polynomial of inner product approximation. Compared with the method of directly using polynomial approximation, such as the one in \citep{DBLP:journals/corr/abs-1802-04085}, which needs exponential (in $p$) number of grids to estimate the function privately, our method avoid this undesirable issue. Using this method, we show that the sample complexity for any $1$-Lipschtiz generalized linear convex function is only linear in $p$. 

\acks{D.W. and J.X. were supported in part by NSF through grants CCF-1422324 and CCF-1716400. A.S. was supported by NSF awards IIS-1447700 and AF-1763786 and a Sloan Foundation Research Award. Part of this research was done while D.W. was visiting Boston University and Harvard University's Privacy Tools Project.}

\bibliography{alt_19}

\end{document}